\def\eqref#1{equation~\ref{#1}}
\def\1{\bm{1}}
\DeclareMathAlphabet{\mathsfit}{\encodingdefault}{\sfdefault}{m}{sl}
\SetMathAlphabet{\mathsfit}{bold}{\encodingdefault}{\sfdefault}{bx}{n}
\def\gA{{\mathcal{A}}}
\def\gL{{\mathcal{L}}}
\def\gP{{\mathcal{P}}}
\def\gX{{\mathcal{X}}}
\def\sC{{\mathbb{C}}}
\def\sR{{\mathbb{R}}}
\def\sV{{\mathbb{V}}}
\newcommand{\E}{\mathbb{E}}
\newcommand{\Var}{\mathrm{Var}}
\DeclareMathOperator{\Tr}{Tr}
\DeclareMathOperator*{\var}{Var}
\DeclareMathOperator{\sg}{sg}
\DeclareMathOperator{\erank}{erank}
\DeclareMathOperator{\diag}{diag}
\def\ie{\textit{i.e.,~}}
\def\eg{\textit{e.g.,~}}
\def\etal{\textit{et al.~}}
\def\wrt{\textit{w.r.t.~}}
\newtheorem{theorem}{Theorem}
\newtheorem{lemma}{Lemma}
\newtheorem{definition}{Definition}
\newtheorem{mechanism}{Hypothesis}
\title{%

Towards a Unified Theoretical Understanding of Non-contrastive Learning via Rank Differential Mechanism
}
\author{Zhijian Zhuo$^{1}$\footnotemark[1] \quad Yifei Wang$^{1}$\footnotemark[1] \quad Jinwen Ma$^{1}$  \quad Yisen Wang$^{2,3}$\footnotemark[2] \\
$^1$School of Mathematical Sciences, Peking University \\
$^2$National Key Lab of General Artificial Intelligence, \\
\ \ School of Intelligence Science and Technology, Peking University \\
$^3$Institute for Artificial Intelligence, Peking University
}
\begin{document}

\maketitle
\renewcommand{\thefootnote}{\fnsymbol{footnote}}
\footnotetext[1]{Equal Contribution.}
\footnotetext[2]{Corresponding Author: Yisen Wang (yisen.wang@pku.edu.cn).}
\renewcommand*{\thefootnote}{\arabic{footnote}}

\begin{abstract}
Recently, a variety of methods under the name of non-contrastive learning (like BYOL, SimSiam, SwAV, DINO) show that when equipped with some asymmetric architectural designs, aligning positive pairs alone is sufficient to attain good performance in self-supervised visual learning. Despite some understandings of some specific modules (like the predictor in BYOL), there is yet no unified theoretical understanding of how these seemingly different asymmetric designs can all avoid feature collapse, particularly considering  methods that also work without the predictor (like DINO). In this work, we propose a unified theoretical understanding for existing variants of non-contrastive learning. Our theory named Rank Differential Mechanism (RDM) shows that all these asymmetric designs create a consistent rank difference in their dual-branch output features. This rank difference will provably lead to an improvement of effective dimensionality and alleviate either complete or dimensional feature collapse. Different from previous theories, our RDM theory is applicable to different asymmetric designs (with and without the predictor), and thus can serve as a unified understanding of existing non-contrastive learning methods. Besides, our RDM theory also provides practical guidelines for designing many new non-contrastive variants. We show that these variants indeed achieve comparable performance to existing methods on benchmark datasets, and some of them even outperform the baselines. Our code is available at \url{https://github.com/PKU-ML/Rank-Differential-Mechanism}.

\end{abstract}

\section{Introduction}
\label{sec:intro}
Self-supervised learning of visual representations has undergone rapid progress in recent years, particularly due to the rise of contrastive learning (CL) \citep{InfoNCE,wang2021residual}. Canonical contrastive learning methods like SimCLR \citep{simclr} and MoCo \citep{moco} utilize both positive samples (for feature alignment) and negative samples (for feature uniformity). Surprisingly, researchers notice that CL can also work well by only aligning positive samples, which is referred to as non-contrastive learning.
Without the help of negative samples, various techniques are proposed to prevent feature collapse, for example, stop-gradient, momentum encoder, predictor (BYOL \citep{BYOL}, SimSiam \citep{simsiam}), Sinkhorn iterations (SwAV \citep{swav}), feature centering and sharpening (DINO \citep{dino}). These above designs all create a certain of \textit{asymmetry} between the online branch (with gradient) and the target branch (without gradient) \citep{wang2022importance}. Empirically, these tricks can successfully alleviate feature collapse and obtain comparable or even superior performance than canonical contrastive learning. Despite this progress, it is still not clear why these different heuristics can reach the same goal. 

Some existing works are proposed to understand some specific non-contrastive techniques, mostly focusing on the predictor head proposed by BYOL \citep{BYOL}. From an empirical side, \citet{simsiam} think that the predictor helps approximate the expectation over augmentations, and \cite{howsimsiam} take a center-residual decomposition of representations for analyzing the collapse.
From a theoretical perspective, \citet{tian2021understanding} analyze the dynamics of predictor weights under simple linear networks, and \citet{wen2022mechanism} obtain optimization guarantees for two-layer nonlinear networks. These theoretical discussions often need strong assumptions on the data distribution (\eg standard normal \citep{tian2021understanding}) and augmentations (\eg random masking \citep{wen2022mechanism}). Besides, their analyses are often problem-specific, which is hardly extendable to other non-contrastive variants without a predictor, \eg DINO. 
Therefore, a natural question is raised here: 
\begin{quote}
    \emph{Are there any basic principles behind these seemingly different techniques?}
\end{quote}

\begin{figure}[t]
    \centering

    \includegraphics[width=\linewidth]{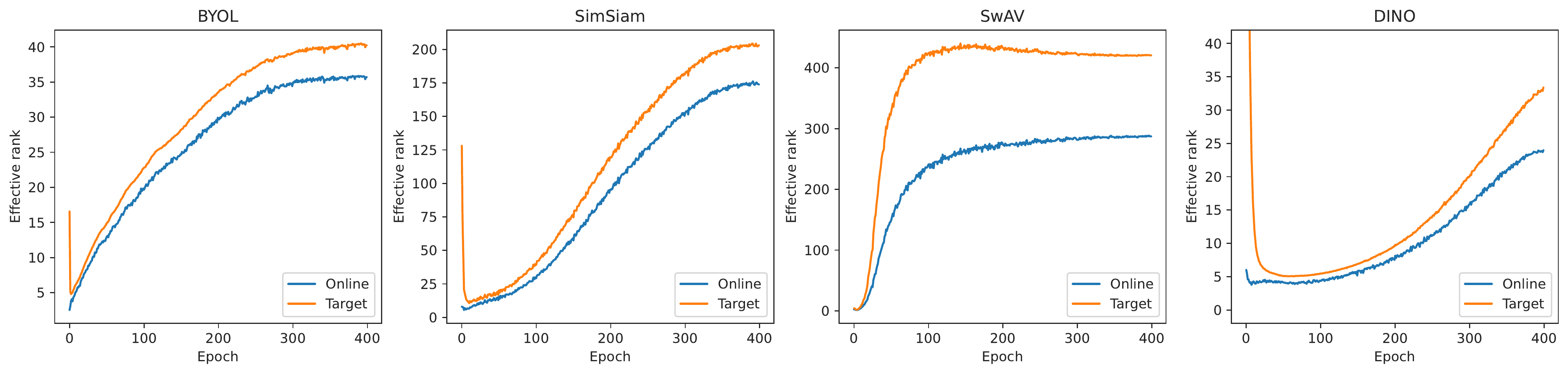}

    \caption{The effective rank of the normalized outputs of the online and target branches for four different non-contrastive methods (BYOL, SimSiam, SwAV, and DINO) on CIFAR-10. 
    }
    \label{fig:erank-cifar10}
\end{figure}

In this paper, we make the first attempt in this direction by discovering a common mechanism behind these non-contrastive variants. To get a glimpse of it, in Figure \ref{fig:erank-cifar10}, we measure the effective rank \citep{roy2007effective} of four different non-contrastive methods (BYOL, SimSiam, SwAV, and DINO). We find the following phenomenons: 1) among different methods, the target branch (orange line) has consistently higher rank than the online branch (blue line); 2) after the initial warmup stage, the rank of the online branch (blue line) consistently improves along the training process. 
Inspired by this observation, we propose a new theoretical understanding of non-contrastive methods, dubbed Rank Differential Mechanism (RDM), where we show that these different techniques essentially behave as a low-pass spectral filter, which is guaranteed to induce the rank difference above and avoid feature collapse along the training. We summarize the contribution of this work as follows:
\begin{itemize}
    \item \textbf{Asymmetry matters for feature diversity}. In contrast to common beliefs, we show that even a symmetric architecture can provably alleviate complete feature collapse. However, it still suffers from low feature diversity, collapsing to a very low dimensional subspace. It indicates the key role of asymmetry is to avoid \emph{dimensional} feature collapse.
    \item \textbf{Asymmetry induces low-pass filters that provably avoid dimensional collapse.} Based on theoretical and empirical evidence on real-world data, we point out the common underlying mechanism of asymmetric designs in BYOL, SimSiam, SwAV, DINO is that they behave as low-pass online-branch filters, or equivalently, high-pass target-branch filters. 
    We further show that the asymmetry-induced low-pass filter can \emph{provably} yield the rank collapse (Figure \ref{fig:erank-cifar10}) and prevent feature collapse along the training process.
    \item \textbf{Principled designs of asymmetry.} Following the principle of RDM, we design a series of non-contrastive variants to empirically verify the effectiveness of our theory. For the online encoder, we show that different variants of low-pass filters can also attain fairly good performance. We also design a new kind of \emph{target predictors with high-pass filters}. Experiments show that SimSiam with our target predictors can outperform DirectPred \citep{tian2021understanding} and achieve comparable or even superior performance to the original SimSiam.
\end{itemize}

\section{Related Work}

\textbf{Non-contrastive Learning}. Among existing methods, BYOL \citep{BYOL} is the first to show we can alleviate the feature collapse of aligning positive samples along with an online predictor and a momentum encoder. Later, SimSiam \citep{simsiam} further simplifies this requirement and shows that only the online predictor is enough. As for another thread, SwAV \citep{swav} applies Sinkhorn-Knopp iterations \citep{cuturi2013sinkhorn} to the target output from an optimal transport view. DINO \citep{dino} further simplifies this approach by simply combining feature centering and feature sharpening. Remarkably, all these methods adopt an online-target dual-branch architecture and gradients from the target branch are detached.
Our theory provides a unified understanding of these designs and reveals their common underlying mechanisms. {Additional comparison with related work is included in Appendix \ref{sec:appendix-related-work}.}

\textbf{Dimensional Collapse of Self-supervised Representations.} Prior to ours, several works also explore the dimensional collapse issue in contrastive learning. {\cite{ermolov2021whitening}, \cite{hua2021feature}, \cite{weng2022investigation} and \cite{zhang2021zero} propose whitening techniques to alleviate dimensional collapse, similar in spirit to Barlow Twins \citep{barlowtwins} with a feature decorrelation regularization. }
\cite{jing2021understanding} point out the dimensional collapse of contrastive learning without using the projector, and propose DirectCLR as a direct replacement. Instead, our work mainly focuses on understanding the role of asymmetric designs on overcoming dimensional collapse.

\textbf{Theoretical Analysis on Contrastive Learning}. \cite{arora} first establish downstream guarantees for contrastive learning, which are later gradually refined \citep{nozawa2021understanding,ash2021investigating,Bao2021OnTS}. \cite{tosh} and \cite{jason} also propose similar guarantees on downstream tasks. However, these methods mostly rely on the conditional independence assumption that is far from practice. Recently, \cite{haochen} and \cite{wang2022chaos} propose an augmentation graph perspective with more practical assumptions, and contribute the generalization ability to the existence of augmentation overlap (which also exists for non-contrastive learning). \cite{wen2021toward} analyze the feature dynamics of contrastive learning with shadow ReLU networks.

\begin{figure}[t]
\centering  %
\subfigure[Accuracy]{   %
\begin{minipage}{0.23\textwidth}
\centering    %
\includegraphics[width=\linewidth]{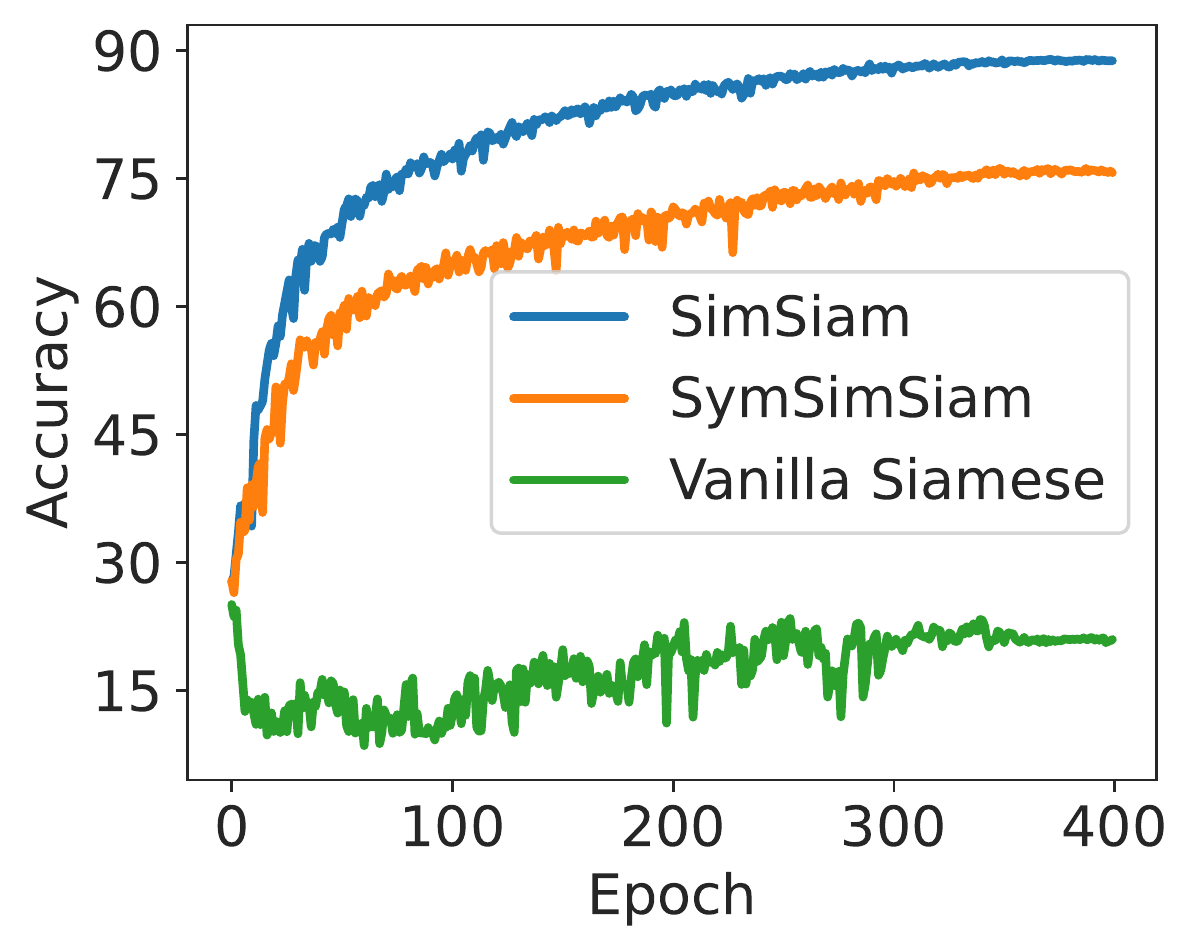}  
\label{fig:symsimsiam-acc}
\end{minipage}
}
\hfill
\subfigure[Variance]{   %
\begin{minipage}{0.23\textwidth}
\centering    %
\includegraphics[width=\linewidth]{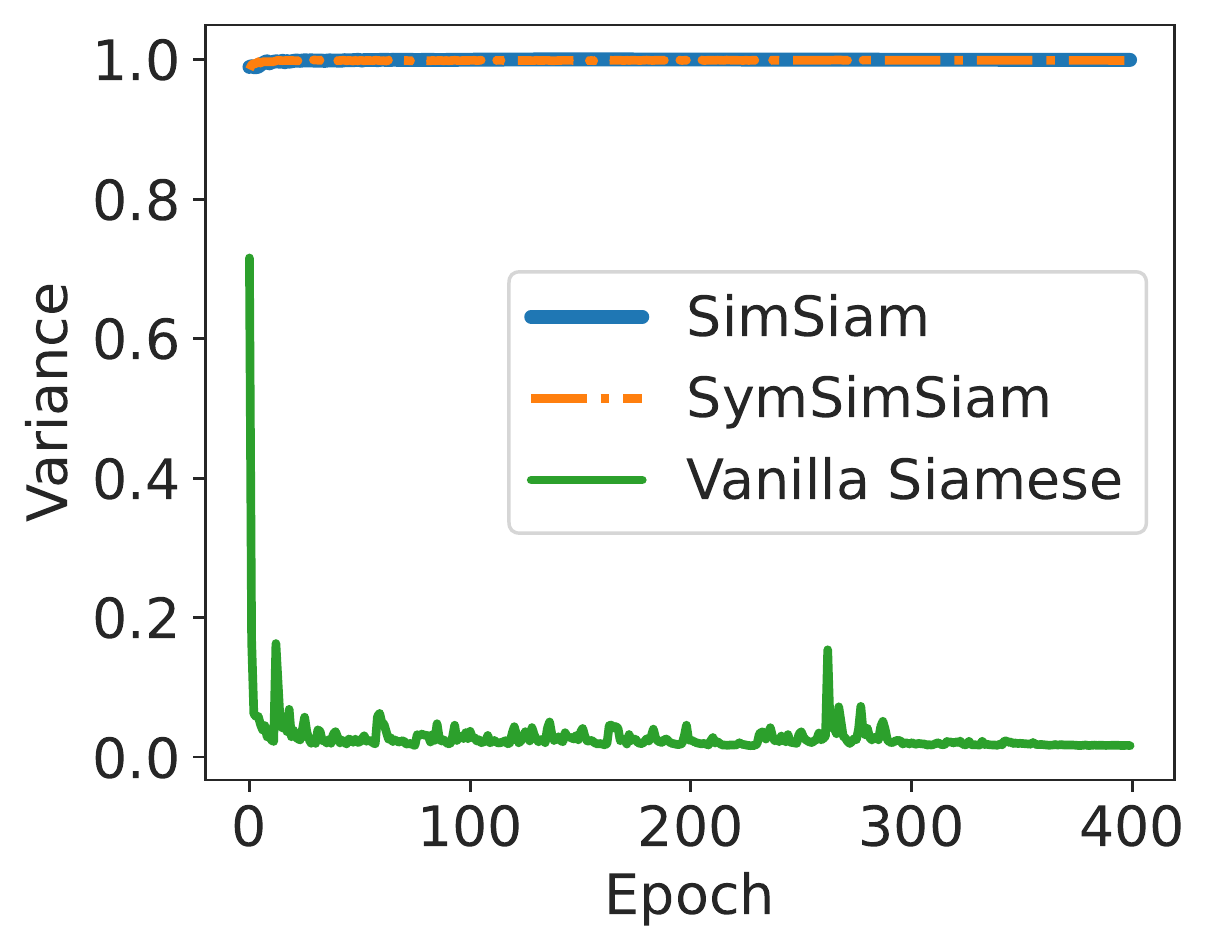}  
\label{fig:symsimsiam-var}
\end{minipage}
}
\hfill
\subfigure[Eigenvalues]{ %
\begin{minipage}{0.23\textwidth}
\centering    %
\includegraphics[width=\linewidth]{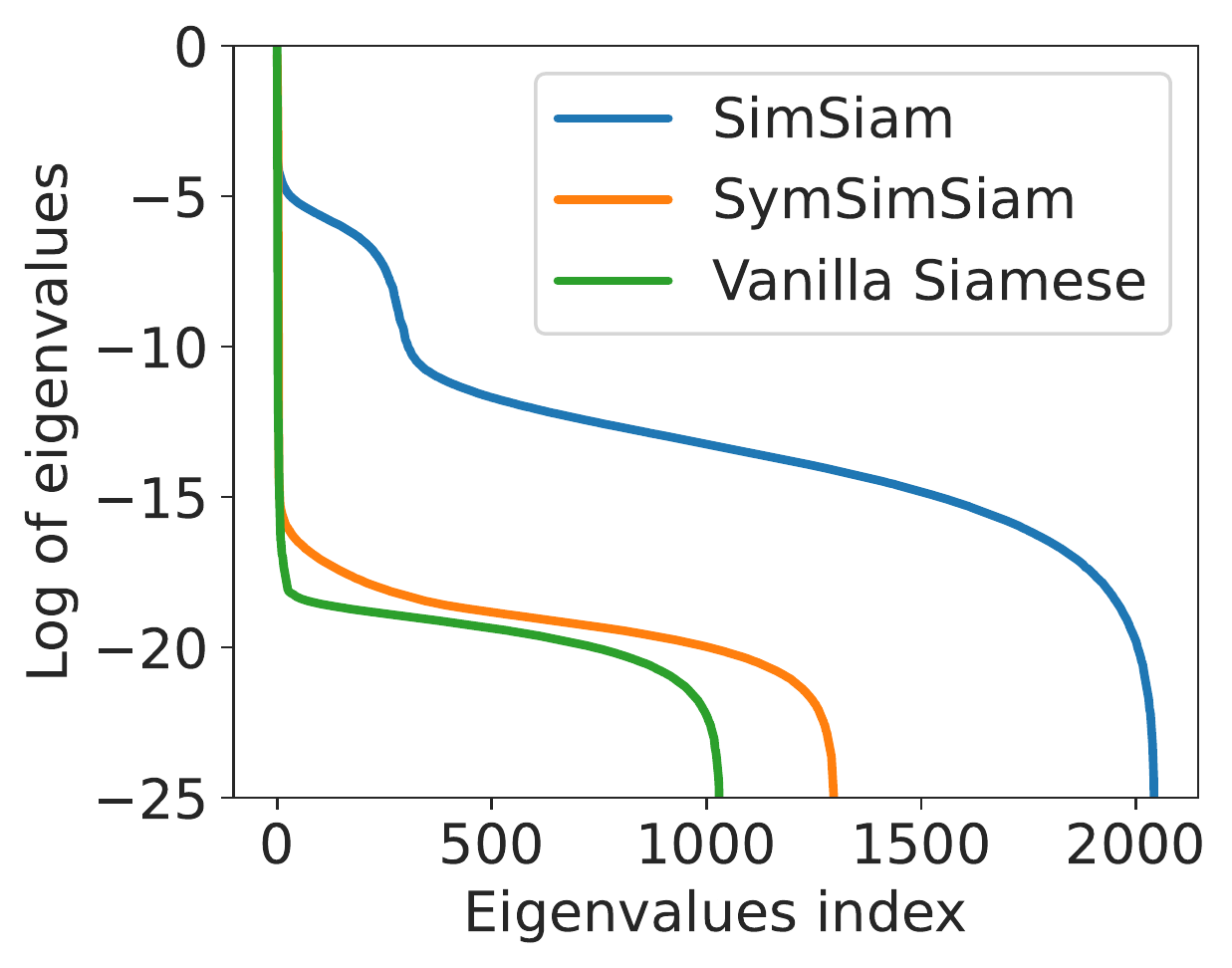}
\label{fig:symsimsiam-eigenvalue}
\end{minipage}
}
\hfill
\subfigure[Two branches]{ %
\begin{minipage}{0.23\textwidth}
\centering    %
\includegraphics[width=\linewidth]{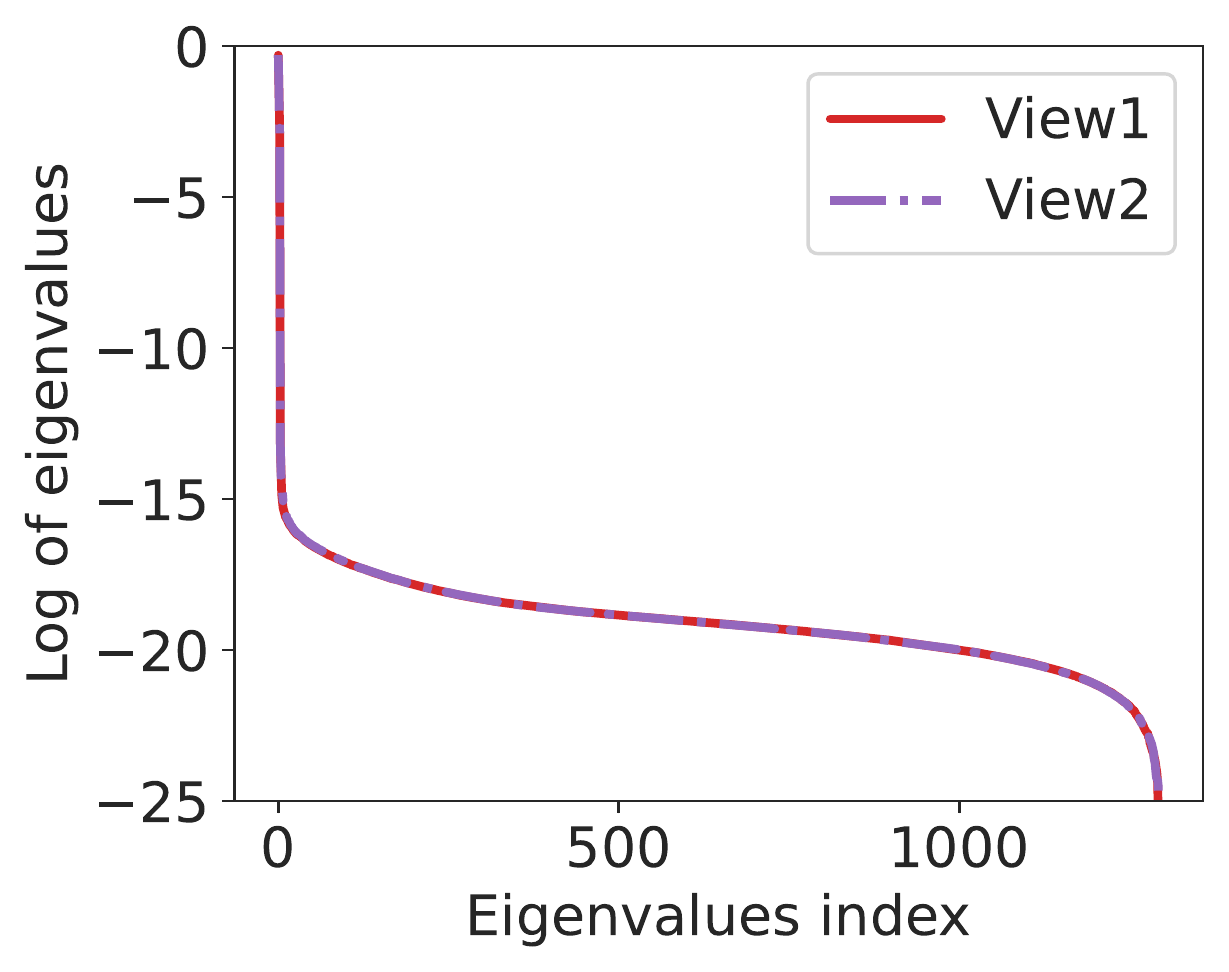}
\label{fig:symsimsiam-branch}
\end{minipage}
}
\caption{Comparison  of the asymmetric SimSiam with two symmetric baselines, SymSimSiam (ours) and Vanilla Siamese on CIFAR-10, where SymSimSiam adopts $\gL_{\rm sym}(f'_\theta)$ (Eq.~\ref{eqn:sym}), while Vanilla Siamese adopts $\gL_{\rm sym}(f_\theta)$. \subref{fig:symsimsiam-acc}: Linear probing test accuracy.  \subref{fig:symsimsiam-var}: Feature variance of normalized outputs. \subref{fig:symsimsiam-eigenvalue}: Eigenvalues of the correlation matrix of the normalized outputs. \subref{fig:symsimsiam-branch}: Eigenvalues of the correlation matrix of the normalized outputs from two branches of SymSimSiam.}
\label{fig:center-performace}
\end{figure}

\section{Asymmetry is the Key to Alleviate Dimensional Collapse}

Prior works tend to believe that asymmetric designs are necessary for avoiding complete feature collapse \citep{howsimsiam}, while we show that a \textit{fully symmetric} architecture, dubbed SymSimSiam (Symmetric Simple Siamese network), can also avoid complete collapse.
Specifically, we simply align the positive pair $(x,x^+)$ with a symmetric alignment loss, 
\begin{equation}
 \gL_{\rm sym}(f'_\theta)=-\E_{x,x^+}f'_\theta(x)^\top f'_\theta(x^+),
\end{equation}
where we apply feature centering on the output of an encoder $f$, \ie
$f'_\theta(\cdot)=f_\theta(\cdot)-\mu$. The feature average $\mu=\E_{x}f_\theta(x)$ can be computed via  mini-batch estimate or exponential moving average as in Batch Normalization \citep{batchnorm} (see Algorithm \ref{alg:symsimsiam}). 
Theorem \ref{thm:symsimsiam} states that SymSimSiam can avoid complete collapse by simultaneously maximizing feature variance $\var(f_\theta(x))$.
\begin{theorem}
When $f_\theta(x)$ is $\ell_2$-normalized,  the SymSimSiam objective is equivalent to 
\begin{equation}
\label{eqn:sym}
    \gL_{\rm sym}(f'_\theta)=\gL_{\rm sym}(f_\theta)-\var(f_\theta(x))+1=-\E_{x,x^+}f_\theta(x)^\top f_\theta(x^+)-\E_x\|f_\theta(x)-\mu\|^2+1.
\end{equation}
\label{thm:symsimsiam}
\end{theorem}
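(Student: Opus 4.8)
The plan is a direct computation in two short steps, using only the definition $f'_\theta(x)=f_\theta(x)-\mu$ with $\mu=\E_x f_\theta(x)$, the fact that $x$ and $x^+$ share the same marginal law, and the $\ell_2$-normalization assumption.

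First I would expand the centered inner product,
\[
f'_\theta(x)^\top f'_\theta(x^+)=f_\theta(x)^\top f_\theta(x^+)-f_\theta(x)^\top\mu-\mu^\top f_\theta(x^+)+\|\mu\|^2 ,
\]
and take the expectation over the positive pair $(x,x^+)$. Since $x$ and $x^+$ are both produced by the same augmentation procedure from the same data distribution, each has marginal law equal to the one defining $\mu$; treating $\mu$ as a deterministic vector and pulling it out of the expectation gives $\E_{x,x^+}f_\theta(x)^\top\mu=\|\mu\|^2=\E_{x,x^+}\mu^\top f_\theta(x^+)$. Collecting the two copies of $-\|\mu\|^2$ with the $+\|\mu\|^2$ leaves $\E_{x,x^+}f'_\theta(x)^\top f'_\theta(x^+)=\E_{x,x^+}f_\theta(x)^\top f_\theta(x^+)-\|\mu\|^2$, i.e. $\gL_{\rm sym}(f'_\theta)=\gL_{\rm sym}(f_\theta)+\|\mu\|^2$.

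Second, I would convert $\|\mu\|^2$ into the feature variance via the vector-valued bias--variance identity $\var(f_\theta(x))=\E_x\|f_\theta(x)-\mu\|^2=\E_x\|f_\theta(x)\|^2-\|\mu\|^2$. Under $\ell_2$-normalization, $\|f_\theta(x)\|^2\equiv 1$, so $\E_x\|f_\theta(x)\|^2=1$ and $\|\mu\|^2=1-\var(f_\theta(x))$. Substituting into the previous display yields $\gL_{\rm sym}(f'_\theta)=\gL_{\rm sym}(f_\theta)-\var(f_\theta(x))+1$, and expanding $\gL_{\rm sym}(f_\theta)=-\E_{x,x^+}f_\theta(x)^\top f_\theta(x^+)$ together with $\var(f_\theta(x))=\E_x\|f_\theta(x)-\mu\|^2$ recovers the final expression in the statement.

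There is no substantive obstacle here; the only point requiring a word of care is the justification that the cross terms collapse, namely that $x$ and $x^+$ have the marginal used to define $\mu$ and that $\mu$ can be treated as a constant under the expectation. If $\mu$ is instead realized as a mini-batch or exponential-moving-average estimate (as in Algorithm~\ref{alg:symsimsiam}), the same observation shows the identity holds in expectation, respectively in the limit.
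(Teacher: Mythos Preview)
Your proposal is correct and follows essentially the same route as the paper: expand the centered inner product, use that $x$ and $x^+$ share the marginal defining $\mu$ to reduce the cross terms to $\|\mu\|^2$, and then invoke the identity $\var(f_\theta(x))=\E_x\|f_\theta(x)\|^2-\|\mu\|^2=1-\|\mu\|^2$ from $\ell_2$-normalization. The paper's proof is a one-line version of exactly this computation.
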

\vspace{-0.2in}
Empirically shown in Figures \ref{fig:symsimsiam-acc} \& \ref{fig:symsimsiam-var}, compared to vanilla Siamese network, SymSimSiam indeed alleviates complete collapse, \ie the feature variance is maximized along training and a good linear probing accuracy is achieved. However, the accuracy of SymSimSiam is also obviously lower than the asymmetric SimSiam (Figure \ref{fig:symsimsiam-acc}). This indicates that symmetric design could alleviate complete collapse, but it may be \textit{not} enough to prevent \textit{dimensional feature collapse}. Intuitively, features uniformly distributed on a great circle of a unit sphere have maximal variance while being dimensionally collapsed. As further shown in Figure \ref{fig:symsimsiam-eigenvalue}, SymSimSiam indeed suffers from more severe dimensional collapse compared to SimSiam. With few effective dimensions, the encoder network has limited ability to encode rich semantics \citep{haochen}.

The above SymSimSiam experiments show that with a symmetric architecture, we can easily prevent complete collapse while hardly improving the effective feature dimensionality for overcoming dimensional collapse. Instead, we also notice that SimSiam with asymmetric designs can alleviate dimensional collapse and achieve better performance. It reflects the fact that the asymmetry in existing non-contrastive methods is the key to alleviating \textit{dimensional collapse}, which leads us to the main focus of our paper, \ie demystifying asymmetric designs.

\section{The Rank Difference Mechanism of Asymmetric Designs}
\label{sec:rank-difference}
In Figure \ref{fig:erank-cifar10}, we have observed a common mechanism behind the non-contrastive learning methods: these asymmetric designs create \textit{positive rank differences} between the target and online outputs consistently throughout training. Here, we provide a formal analysis of this phenomenon from both theoretical and empirical sides and show how it helps alleviate dimensional feature collapse. 

\textbf{Problem Setup.}
Given a set of natural training data $\bar\gX=\{\bar x\mid \bar x\in\sR^d\}$, we can draw a pair of positive samples $(x,x^+)$ from independent random augmentations of a natural example $\bar x$ with distribution $\gA(\cdot|\bar x)$. Their joint distribution satisfies $\gP(x,x^+)=\gP(x^+,x)=\E_{\bar x}\gA(x|\bar x)\gA(x^+|\bar x)$.
Without loss of generality, we consider a finite sample space $|\gX|=n$ (can be exponentially large) following \cite{haochen}, and denote the collection of online and target outputs as $p,z\in\sR^{n\times k}$ whose $x$-th row is $p_x,z_x$, respectively. We consider a general alignment loss
\begin{equation}
 L(p)=\E_{x,x^+}\ell(p_x,\sg(z_{x^+}))   
 \label{eq:general-alignment-loss}
\end{equation}
to cover different variants of non-contrastive methods. The online and target outputs $p_x,z_x$ are either $\ell_2$-normalized (BYOL and SimSiam) or $\operatorname{softmax}$-normalized (SwAV and DINO). For the loss function $\ell$, BYOL and SimSiam adopt the mean square error (MSE) loss, while SwAV and DINO adopt the cross entropy (CE) loss. $\sg(\cdot)$ denotes stopping the gradients from the operand. 
{
For an encoder $f$, we define its feature correlation matrix $\sC=\E_x f(x)f(x)^\top$, whose spectral decomposition is $\sC=V\Lambda V^\top$, where $V$ contains unit eigenvectors in columns, and $\Lambda$ is the diagonal matrix with descending eigenvalues $\lambda_1\geq\dots\geq\lambda_k\geq0$.

\textbf{Measure of Dimensional Collapse.}
A well-known measure of the effective feature dimensionality is the effective rank (erank) of the feature correlation matrix $\sC=\E_x f(x)f(x)^\top$ \citep{roy2007effective}. 
Specifically, $\operatorname{erank}(\sC)=\exp(H(q))$, where $q=(q_1,\dots,q_k),q_i=\lambda_i/\sum_{i}\lambda_i$ are the normalized eigenvalues as a probability distribution, and $H(q)=-\sum_iq_i\log(q_i)$ is its Shannon entropy. Compared to the canonical rank, the effective rank is real-valued and invariant to feature scaling. A more uniform distribution of eigenvalues has a larger effective rank, and vice versa. Thus, the effective rank of $\sC$ is a proper metric to measure the degree of dimensional feature collapse.

\textbf{Spectral Filters.} In the signal processing literature, a spectral filtering process $G$ of a signal $f$ is to apply a scalar function (\ie a spectral filter) $g:\sR\to\sR$ element-wisely on its eigenvalues in its spectral domain, \ie $u_x=Gf(x)=Vg(\Lambda)V^\top f(x)$, where $G=Vg(\Lambda)V^\top$ is also known as a spectral convolution operator. The filtered signal admits $C_u=\E_xu_xu_x^\top=Vg(\Lambda)\Lambda V^T$. Depending on the property of $g$, a filter can be categorized as low-pass, high-pass, band-pass or band-stop. Generally speaking, a low-pass filter will amplify large eigenvalues and diminish smaller ones (\eg a monotonically increasing function), and a high-pass filter does the opposite. Many canonical algorithms can be seen as special cases of spectral filtering, \eg PCA-based image denoising is low-pass filtering. 
}

\begin{figure}[t]
\centering
\includegraphics[width=\linewidth]{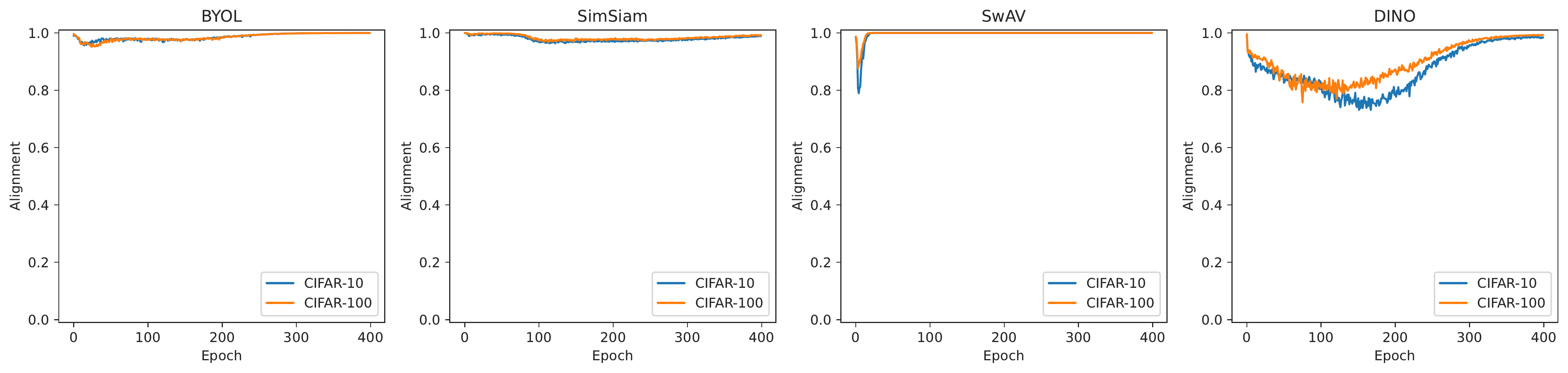}
\caption{Eigenspace alignment of the online and target outputs along the training. Given $u_i$ as an eigenvector of $\sC_z$, if it is also an eigenvector of $\sC_p$, then $\sC_pu_i=\lambda'_i u_i$ is in the same direction as $u_i$. Thus, we measure the alignment of $u_i$ by computing the cosine similarity between $u_i$ and $\sC_p u_i$, and take the average over each $u_i$ as the overall eigenspace alignment (details in Appendix \ref{sec:experimental-setting-for-figures}).
}
\label{fig:alignment}
\end{figure}

\subsection{Asymmetric Designs Behave as Spectral Filters}
\label{sec:mechanism1}

First of all, we notice that regardless of the existence of asymmetry, the alignment of two-branch outputs in non-contrastive learning will enforce the two-branch output features of the positive pairs to be close to each other. Therefore, from a spectral viewpoint, a natural hypothesis is that the online and target features will be aligned into the same eigenspace, and only differ slightly in the eigenvalues. We describe this hypothesis formally below.
\begin{definition}[Eigenspace alignment]
For two matrices $A$ and $B$, they have aligned eigenspace if
\begin{equation}
 \exists\ V, \text{ s.t. } A=V\Lambda_aV^\top, B=V\Lambda_bV^\top,   
\end{equation}
where $V$ is an orthogonal matrix of eigenvectors and $\Lambda_a,\Lambda_b$ consist non-increasing eigenvalues.
\end{definition}
\begin{mechanism}
During training, non-contrastive learning aligns the eigenspace of three correlation matrices of output features:  the online correlation $\sC_p=\E_{x}p_xp_x^\top$, the target correlation $\sC_z=\E_{x}z_xz_x^\top$, and the feature correlation of positive samples $\sC_+=\E_{x,x^+}z_xz_{x^+}^\top$. 
\label{mechanism1}
\end{mechanism}

Next, we validate this hypothesis from both theoretical and empirical aspects.
{To begin with, we consider a simplified setting adopted in prior work \citep{tian2021understanding} for the ease of analysis:
1) data isotropy, where the natural data distribution $p(\bar x)$ has zero mean and identity covariance, and the augmentation $\gA(x|\bar x)$ has mean $\bar x$ and covariance $\sigma^2I$; 2) linear encoder $z_x=f(x)=W_fx,W_f\in\sR^{d\times k}$; 3) linear online predictor $p_x=W z_x,W\in\sR^{k\times k}$. 
Under this setting, the following lemma shows that for an arbitrary encoder $f$, 
the eigenspace of three correlation matrices indeed align well: 

\begin{lemma}
With the assumptions above as in \cite{tian2021understanding}, when the predictor $W^*$ minimizes the alignment loss (Eq.~\ref{eq:general-alignment-loss}),  we have 
\begin{equation}
 \exists\ V, \text{ s.t. } \sC_p=V\Lambda_pV^\top, \sC_z=V\Lambda_zV^\top, \sC_+=V\Lambda_+V^\top,   
\end{equation}
where $V$ is an orthogonal matrix and $\Lambda_p,\Lambda_z,\Lambda_+$ are diagonal matrices consisting of descending eigenvalues $\lambda^p_i,\lambda^z_i,\lambda^+_i,i=1,\dots,k,$ respectively.
\label{thm:alignment-linear-setting}
\end{lemma}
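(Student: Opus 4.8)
The plan is to prove Lemma~\ref{thm:alignment-linear-setting} by direct computation, using the isotropy assumption to collapse everything onto the spectral decomposition of the single matrix $M := W_f W_f^\top$. The strategy is: (i) show $\sC_z$ and $\sC_+$ are both positive scalar multiples of $M$, hence automatically co-diagonalizable; (ii) solve for the optimal predictor $W^*$ in closed form and check that $\sC_p = W^*\sC_z(W^*)^\top$ is also a scalar multiple of $M$.

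First I would compute the relevant second moments of the raw inputs. Conditioning on a natural example $\bar x$, the two augmentations $x,x^+$ are independent with mean $\bar x$ and covariance $\sigma^2 I$, so $\E[xx^\top\mid\bar x]=\sigma^2 I+\bar x\bar x^\top$ and $\E[x(x^+)^\top\mid\bar x]=\bar x\bar x^\top$; averaging over $\bar x$ (zero mean, identity covariance) gives $\E_x[xx^\top]=(1+\sigma^2)I$ and $\E_{x,x^+}[x(x^+)^\top]=I$. Substituting the linear encoder $z_x=W_f x$ yields $\sC_z=\E_x z_x z_x^\top=(1+\sigma^2)W_fW_f^\top=(1+\sigma^2)M$ and $\sC_+=\E_{x,x^+}z_x z_{x^+}^\top=W_fW_f^\top=M$. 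Writing the spectral decomposition $M=V\Lambda_M V^\top$ with $V$ orthogonal and $\Lambda_M$ listing eigenvalues in descending order, we get $\sC_z=V\big((1+\sigma^2)\Lambda_M\big)V^\top$ and $\sC_+=V\Lambda_M V^\top$ for free, so the only nontrivial part is to pin down $\sC_p=W^*\sC_z(W^*)^\top$ at the optimal predictor.

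Next I would characterize $W^*$. For BYOL/SimSiam, $\ell$ is the MSE, so with $p_x=Wz_x$ the objective is $L(W)=\E_{x,x^+}\|Wz_x-z_{x^+}\|^2=\Tr(W\sC_z W^\top)-2\Tr(W\sC_+)+\mathrm{const}$, a convex quadratic in $W$; the stationarity condition $W^*\sC_z=\sC_+$ combined with $\sC_z=(1+\sigma^2)\sC_+$ forces $W^*w=\tfrac{1}{1+\sigma^2}w$ for every $w\in\mathrm{range}(M)$. Since $z_x=W_fx\in\mathrm{range}(W_f)=\mathrm{range}(M)$ always, this gives $p_x=\tfrac{1}{1+\sigma^2}z_x$ and hence $\sC_p=\tfrac{1}{(1+\sigma^2)^2}\sC_z=\tfrac{1}{1+\sigma^2}M=V\big(\tfrac{1}{1+\sigma^2}\Lambda_M\big)V^\top$, which shares the eigenbasis $V$ with $\sC_z$ and $\sC_+$; as all three eigenvalue lists are positive multiples of the diagonal of $\Lambda_M$, their descending orderings agree, proving the claim with $\Lambda_p=\tfrac{1}{1+\sigma^2}\Lambda_M$, $\Lambda_z=(1+\sigma^2)\Lambda_M$, $\Lambda_+=\Lambda_M$. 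For SwAV/DINO, where $\ell$ is the cross-entropy and the outputs are softmax-normalized, I would run the analogous population first-order analysis: for each fixed $x$ the conditional risk $-\,\E[z_{x^+}\mid x]^\top\log\mathrm{softmax}(Wz_x)$ is minimized by matching $\mathrm{softmax}(Wz_x)$ to $\E[z_{x^+}\mid x]$, and by the rotational symmetry of the isotropic setup (e.g., after linearizing $\mathrm{softmax}$ about the uniform point) the best linear predictor realizing this is again diagonal in the basis $V$, so $\sC_p$ is once more $V$-diagonalizable.

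I expect the cross-entropy/softmax case to be the main obstacle: unlike MSE it does not reduce to an exactly solvable quadratic, so one must argue through a first-order optimality/symmetry argument or a small-logit (near-uniform) expansion of $\mathrm{softmax}$, and some care is needed to state precisely the sense in which the optimal $W^*$ is $V$-diagonal. A secondary technicality is possible rank deficiency of $W_f$: there $W^*$ is only constrained on $\mathrm{range}(M)$ and one replaces inverses with pseudoinverses, which merely appends extra zero eigenvalues to $\Lambda_p$ and does not disturb the eigenspace alignment.
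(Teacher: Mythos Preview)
Your MSE computation is correct and matches the paper's approach almost exactly: both reduce to showing $\sC_z=(1+\sigma^2)M$ and $\sC_+=M$ with $M=W_fW_f^\top$, then solve the quadratic in $W$. The one substantive difference is that the paper silently adds a weight-decay term $\eta\|W\|_F^2$ before taking the first-order condition, obtaining
\[
W^* \;=\; W_fW_f^\top\bigl((1+\sigma^2)W_fW_f^\top+\eta I\bigr)^{-1}
\;=\; V\,\diag\!\Bigl(\tfrac{\lambda_i}{(1+\sigma^2)\lambda_i+\eta}\Bigr)\,V^\top,
\]
so that $\sC_p=V\Lambda_w^2\Lambda_zV^\top$ has genuinely \emph{non-proportional} eigenvalues. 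Your unregularized solution $W^*=\tfrac{1}{1+\sigma^2}$ on $\mathrm{range}(M)$ does prove the lemma as stated, but it collapses to the degenerate case where $\sC_p,\sC_z,\sC_+$ are all scalar multiples of $M$; the induced filter $g(\lambda^z)=\sqrt{\lambda^p/\lambda^z}$ is then constant, which would vacuously satisfy the low-pass hypothesis yet give only equality in Theorem~\ref{thm:low-pass} and no rank improvement in Theorem~\ref{thm:online-update-general}. So for the lemma alone your argument is fine and arguably cleaner, but if you intend it to feed the rest of the paper's story you should add the $\eta$-regularization (or note the degeneracy explicitly).

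Your cross-entropy/softmax paragraph should be dropped. The paper's own proof treats only the MSE loss (it opens with $\gL=\tfrac12\E\|Wz_x-z_{x^+}\|^2$ and never revisits CE), so you are not missing anything the authors actually establish. More importantly, your sketch there is not a proof: the conditional minimizer $\mathrm{softmax}(Wz_x)=\E[z_{x^+}\mid x]$ is generally unattainable by any linear $W$, the ``rotational symmetry'' you invoke is broken by the softmax normalization (which is not $O(k)$-equivariant), and the linearization-about-uniform step requires a small-logit regime you have not imposed. If you want to keep a remark about CE, state it as a heuristic, not as part of the proof.
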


Next, we provide an empirical examination of Hypothesis \ref{mechanism1} on real-world data. From Figures \ref{fig:alignment}, we can see that there is a consistently high degree of eigenspace alignment between $\sC_p$ and $\sC_z$ among all non-contrastive methods.\footnote{As DINO adopts dynamic feature sharpening coefficients, there is a larger change of its eigenspace alignment compared to others. Nevertheless, the alignment degree is always above 0.7, which is relatively high.} {In Appendix \ref{sec:evidences_for_mechanism1}, we further verify the alignment \wrt $\sC_+$.}
Therefore, these methods indeed attain a fairly high degree of eigenspace alignment.

\textbf{A Spectral Filter View.}
As a result of eigenspace alignment, the alignment loss essentially works on mitigating the difference in \textit{eigenvalues}. Therefore, we can take a spectral perspective on non-contrastive methods, where {the asymmetric designs are \textit{equivalent} to a spectral filtering process applied to the target output $z_x$, or a target spectral filtering process  applied to the target output $p_x$. As the two cases are equivalent, we mainly take the online filter as an example in the discussion below.
 
\begin{lemma}
Denote an online filter function $g:\lambda^z\to\lambda^g$ that satisfies $\lambda^g_i=\sqrt{\lambda^p_i/\lambda^z_i},i=1,\dots,k$. We can apply a spectral filtering on $z_x$ with $g$, and get $\tilde{p}_x=W_g z_x,W_g=V{g(\Lambda_z)}V^\top$.
Then, we have 
$\sC_{p}=\sC_{\tilde{p}}=\E_x{\tilde{p}_x\tilde{p}^\top_x}$. In other words, $p_x,\tilde p_x$ have the same feature correlation.
\label{thm:spectral-filter}
\end{lemma}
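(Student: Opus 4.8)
The plan is to verify the claimed identity $\sC_p = \sC_{\tilde p}$ by direct spectral computation, leaning on the eigenspace-alignment established in Lemma \ref{thm:alignment-linear-setting}. First I would recall the setup: by Lemma \ref{thm:alignment-linear-setting} there is a common orthogonal matrix $V$ with $\sC_p = V\Lambda_p V^\top$ and $\sC_z = V\Lambda_z V^\top$, where $\Lambda_p = \diag(\lambda^p_1,\dots,\lambda^p_k)$ and $\Lambda_z = \diag(\lambda^z_1,\dots,\lambda^z_k)$. The filter function is defined by $g(\lambda^z_i) = \lambda^g_i = \sqrt{\lambda^p_i/\lambda^z_i}$, so that the filtered operator is $W_g = V\, g(\Lambda_z)\, V^\top$ where $g(\Lambda_z) = \diag(\lambda^g_1,\dots,\lambda^g_k)$.

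The core computation is then
\begin{equation}
\sC_{\tilde p} = \E_x \tilde p_x \tilde p_x^\top = \E_x (W_g z_x)(W_g z_x)^\top = W_g \left(\E_x z_x z_x^\top\right) W_g^\top = W_g\, \sC_z\, W_g^\top,
\end{equation}
where I pull the deterministic matrix $W_g$ out of the expectation (linearity). Substituting $W_g = V g(\Lambda_z) V^\top$ and $\sC_z = V\Lambda_z V^\top$, and using $V^\top V = I$ together with the fact that $g(\Lambda_z)$ and $\Lambda_z$ are diagonal (hence commute), I get
\begin{equation}
\sC_{\tilde p} = V\, g(\Lambda_z)\, \Lambda_z\, g(\Lambda_z)\, V^\top = V\, g(\Lambda_z)^2 \Lambda_z\, V^\top.
\end{equation}
Finally, entrywise $g(\lambda^z_i)^2 \lambda^z_i = (\lambda^p_i/\lambda^z_i)\,\lambda^z_i = \lambda^p_i$, so $g(\Lambda_z)^2 \Lambda_z = \Lambda_p$, giving $\sC_{\tilde p} = V\Lambda_p V^\top = \sC_p$, as desired.

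I do not anticipate a serious obstacle in the algebra itself; the only subtlety worth stating carefully is well-definedness of $g$, i.e. that $\lambda^z_i > 0$ wherever we divide (if some $\lambda^z_i = 0$ then the corresponding $\lambda^p_i = 0$ as well, since $p_x = W z_x$ lies in the column space determined by $z$, and one sets $\lambda^g_i = 0$ there by convention so the identity still holds on that degenerate coordinate). The conceptual content is entirely carried by Lemma \ref{thm:alignment-linear-setting} — once the shared eigenbasis $V$ is granted, the filter is literally just the diagonal rescaling that matches $\Lambda_z$ to $\Lambda_p$, and the lemma is a one-line consequence. I would therefore keep the proof short, flagging the degenerate-eigenvalue convention in a parenthetical remark rather than belaboring it.
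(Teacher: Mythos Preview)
Your proposal is correct and follows essentially the same approach as the paper: compute $\sC_{\tilde p} = W_g \sC_z W_g^\top$, substitute the shared eigenbasis decompositions, collapse $V^\top V = I$, and read off $g(\Lambda_z)^2\Lambda_z = \Lambda_p$ entrywise. The only addition you make beyond the paper is the parenthetical remark on the degenerate case $\lambda^z_i = 0$, which is a reasonable caveat but not something the paper addresses.
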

}
This spectral filter view reveals the key difference between symmetric and asymmetric designs in non-contrastive learning. In the symmetric case, the two branches yield almost equal eigenvalues (Figure \ref{fig:symsimsiam-branch}). Thus, the alignment loss will quickly diminish and the representations collapse dimensionally (Figure \ref{fig:symsimsiam-eigenvalue}). Instead, in asymmetric designs, asymmetric components create a difference in eigenvalues such that the target output generally has a higher rank than the online output (Figure \ref{fig:m2-eigenvalue}). Therefore, the alignment loss will not easily diminish (not necessarily decrease; see Figure \ref{fig:symsimsiam-loss}). Instead, the alignment improves feature diversity in an implicit way, as we will show later.

\begin{figure}[t]
\centering
\subfigure[Eigenvalues of the feature correction matrices of the online and target outputs.]{
\includegraphics[width=\linewidth]{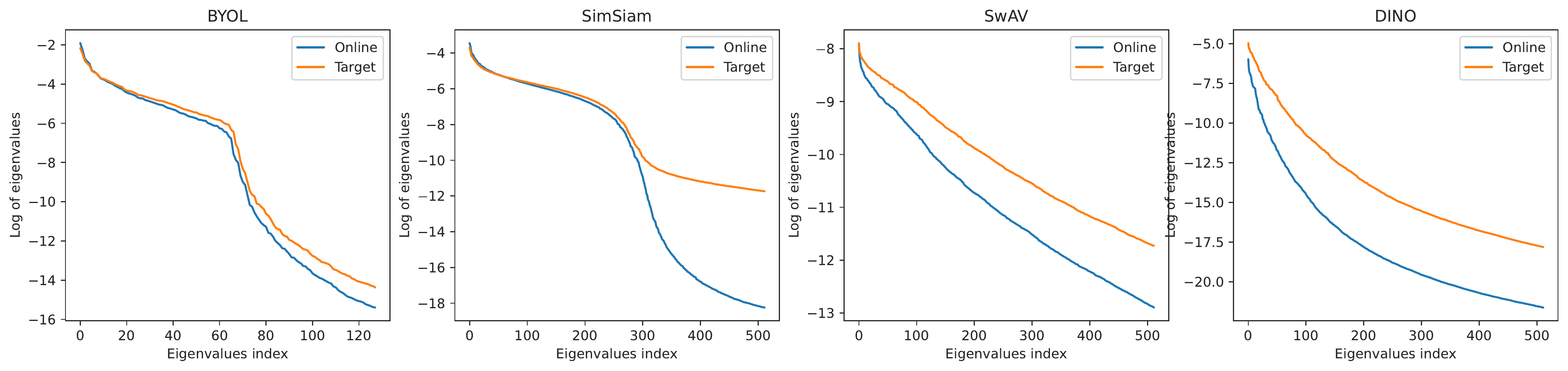}
\label{fig:m2-eigenvalue}
}
\subfigure[The corresponding online spectral filter function $g(\lambda^z_i)=\sqrt{\lambda^p_i/\lambda^z_i}$. 
]{
\includegraphics[width=\linewidth]{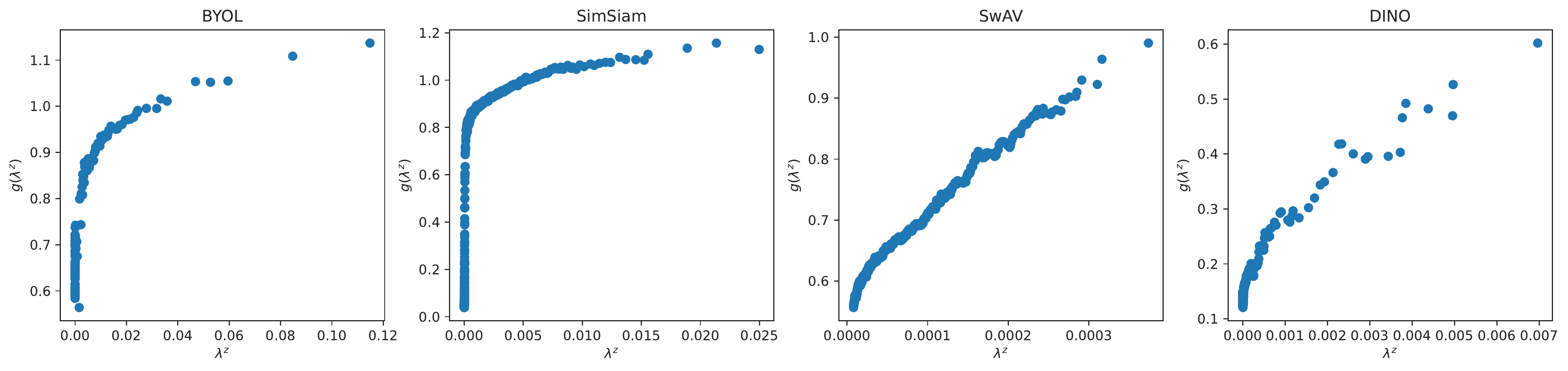}
\label{fig:m2-target-online}
}
\vspace{-0.2cm}
\caption{
Eigenvalues and spectral filters of each method on CIFAR-10: {top eigenvalues (whose sum is larger than 99.99\% of the total sum) are shown, 128 for BYOL and 512 for the other.} 
} 
\label{fig:m2}
\vspace{-0.4cm}
\end{figure}

\subsection{Low-pass Property of Asymmetry-induced Spectral Filters}
\label{sec:mechanism2}

The discussion above reveals that a specific asymmetric design behaves as a spectral filter when applied to non-contrastive learning. To gain some insights for a unified understanding, we further investigate whether there is a common pattern behind the filters of different asymmetric designs.

To achieve this, we calculate and plot the corresponding online filter $g(\lambda^z)=\sqrt{\lambda^p_i/\lambda^z_i}$ of each non-contrastive method. From Figure \ref{fig:m2-target-online}, we find that the spectral filters indeed look very similar, particularly in the sense that all filter functions are roughly monotonically increasing \wrt $\lambda^z$. 
This kind of filter is usually called a \textbf{low-pass filter} because it (relatively) enlarges low-frequency components (large eigenvalues) and shrinks high-frequency components (small eigenvalues). Based on this empirical finding, we propose the following hypothesis on the low-pass nature of asymmetry.
\begin{mechanism}
Asymmetric modules in non-contrastive learning behave as low-pass online filters. Formally, the corresponding spectral filter $g(\lambda^z)=\sqrt{\lambda^p_i/\lambda^z_i}$ is monotonically increasing.\footnote{Equivalently,  when viewing spectral filters from the target branch, the asymmetric modules behave as a high-pass target filter because the corresponding filter function $h(\lambda^p)=\sqrt{\lambda^z_i/\lambda^p_i}$ is monotonically decreasing.}
\label{mechanism2}
\end{mechanism}
We note that we are \emph{not} suggesting \emph{any} asymmetric design behaves as low-pass filters, as someone could easily apply a high-pass filter to the online output deliberately (which, as we have observed, will likely fail). Therefore, our hypothesis above only applies to asymmetric designs that work well in practice. In the discussion below, we further provide theoretical and empirical investigations of why existing asymmetric modules have a low-pass filtering effect.

\begin{figure}[t]
\centering  %
\subfigure[BYOL]{   %
\begin{minipage}{0.23\textwidth}
\centering    %
\includegraphics[width=\linewidth]{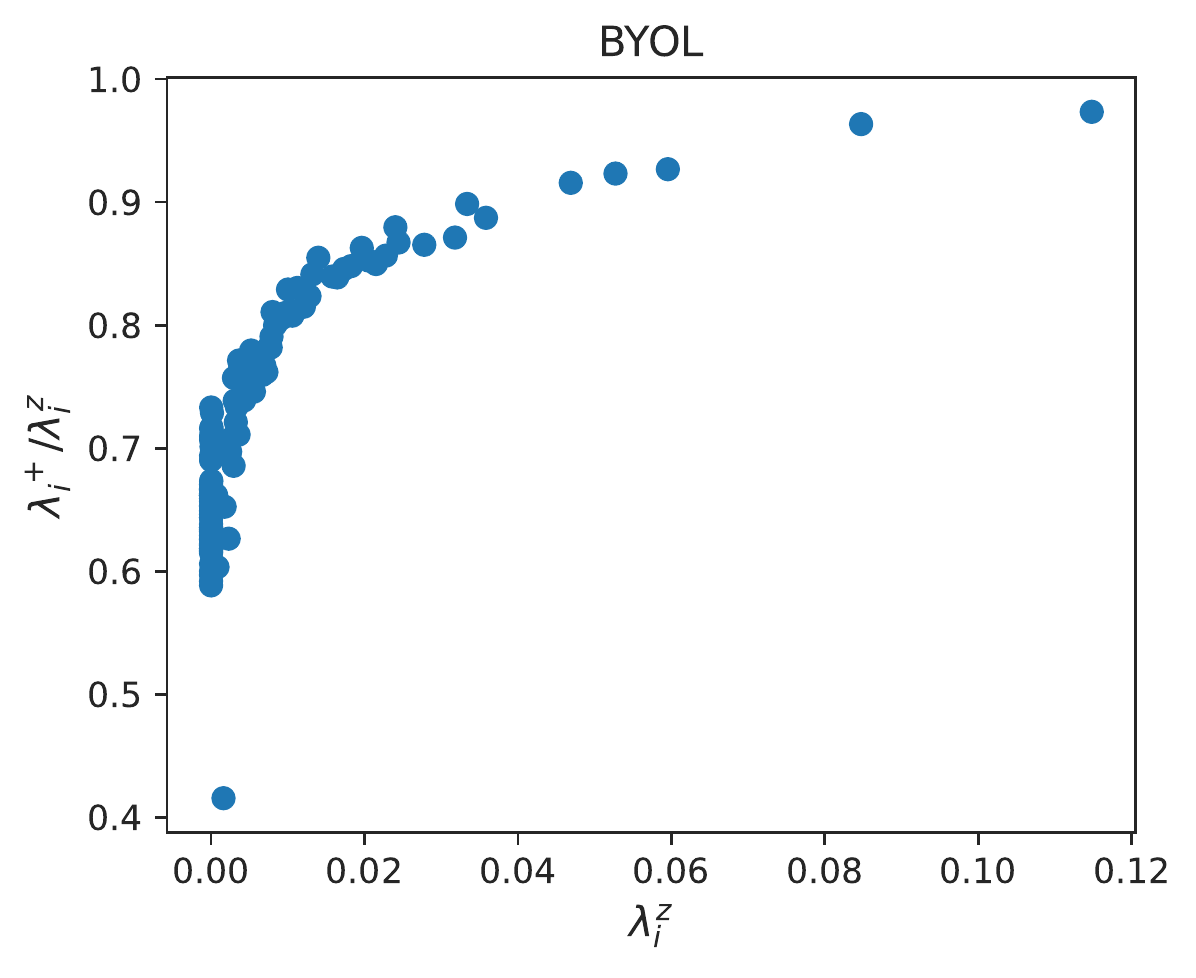}  
\label{fig:byol-epsilon/lambda}
\end{minipage}
}
\hfill
\subfigure[SimSiam]{   %
\begin{minipage}{0.23\textwidth}
\centering    %
\includegraphics[width=\linewidth]{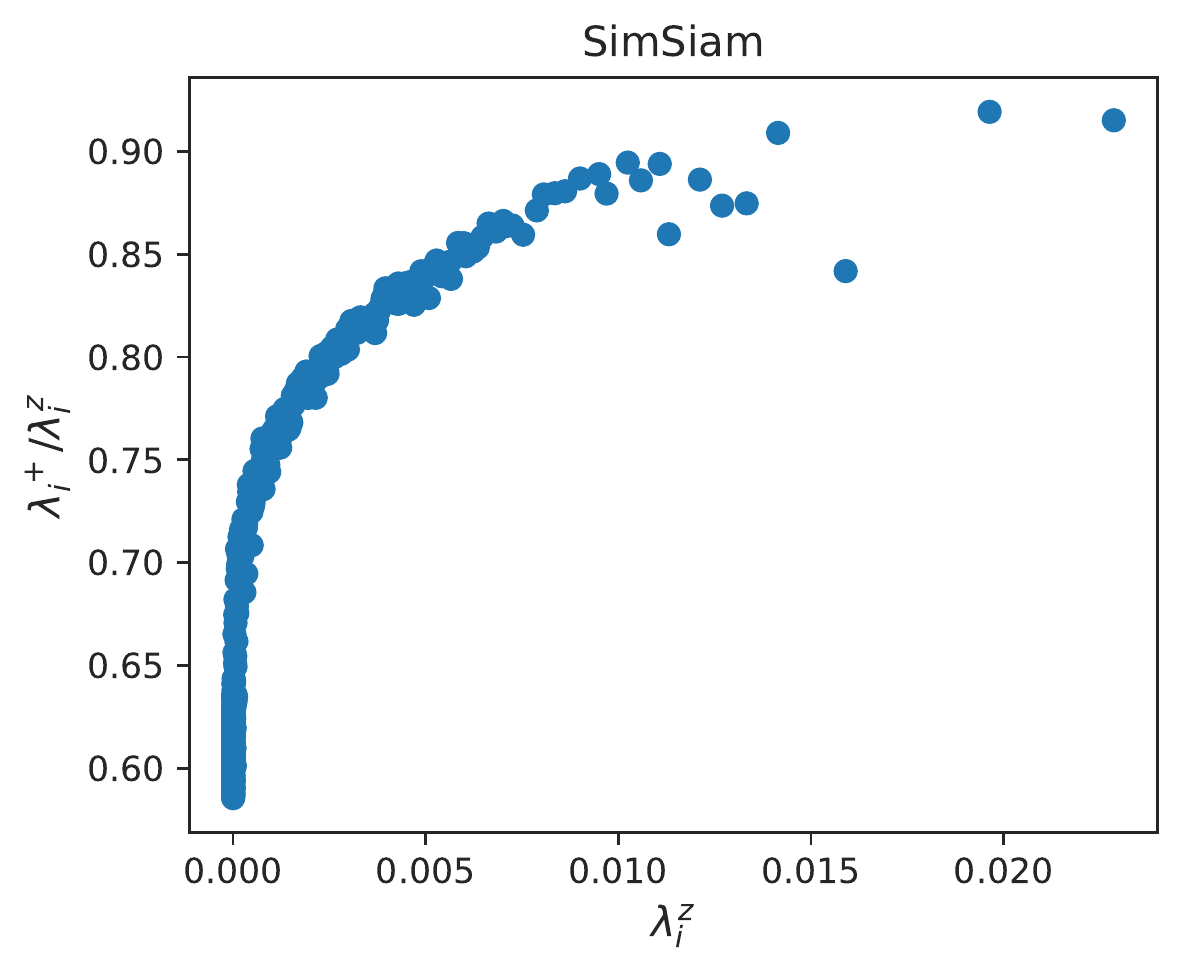}  
\label{fig:simsiam-epsilon/lambda}
\end{minipage}
}
\hfill
\subfigure[SwAV]{ %
\begin{minipage}{0.23\textwidth}
\centering    %
\includegraphics[width=\linewidth]{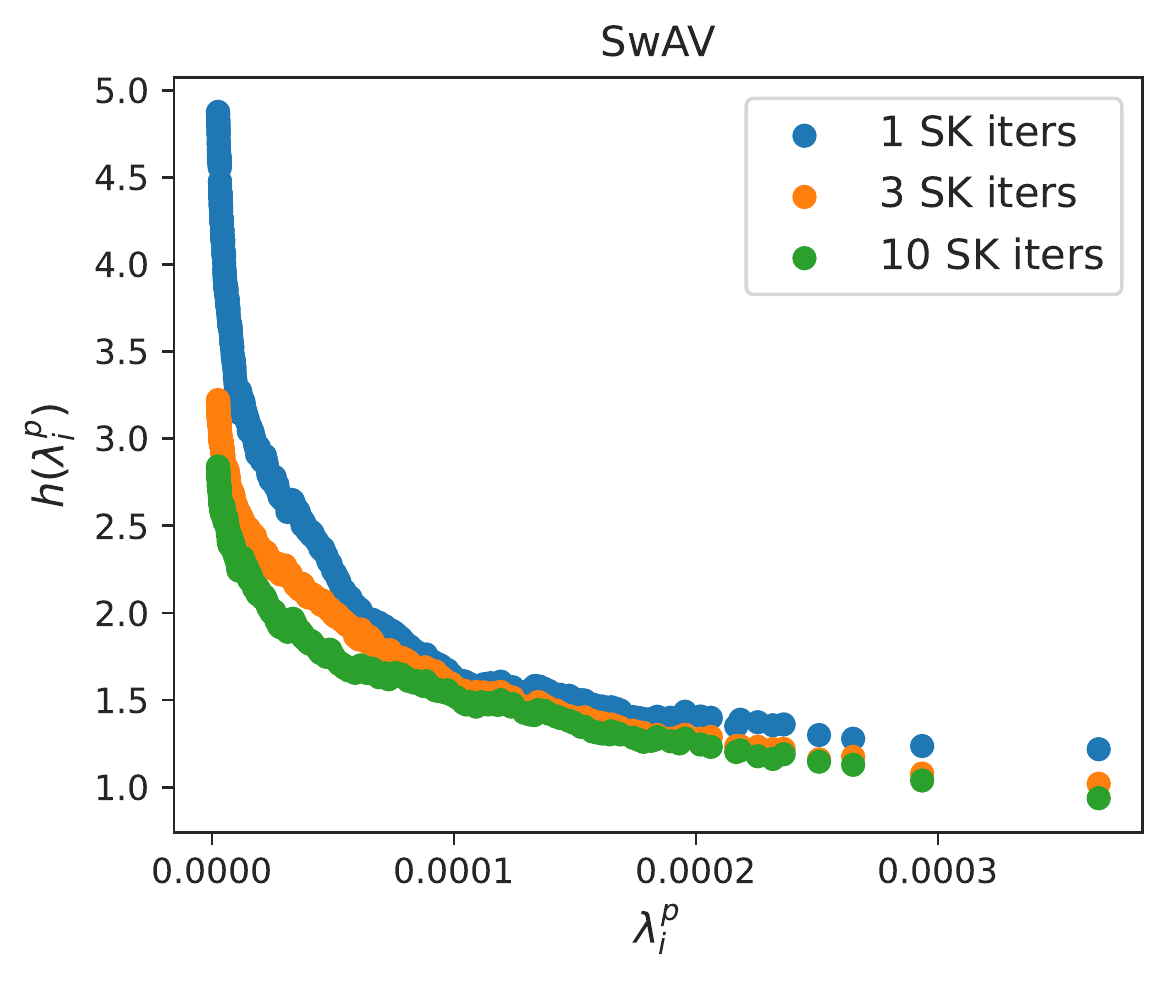}
\label{fig:swav-sinkhorn_iterations}
\end{minipage}
}
\hfill
\subfigure[DINO]{ %
\begin{minipage}{0.23\textwidth}
\centering    %
\includegraphics[width=\linewidth]{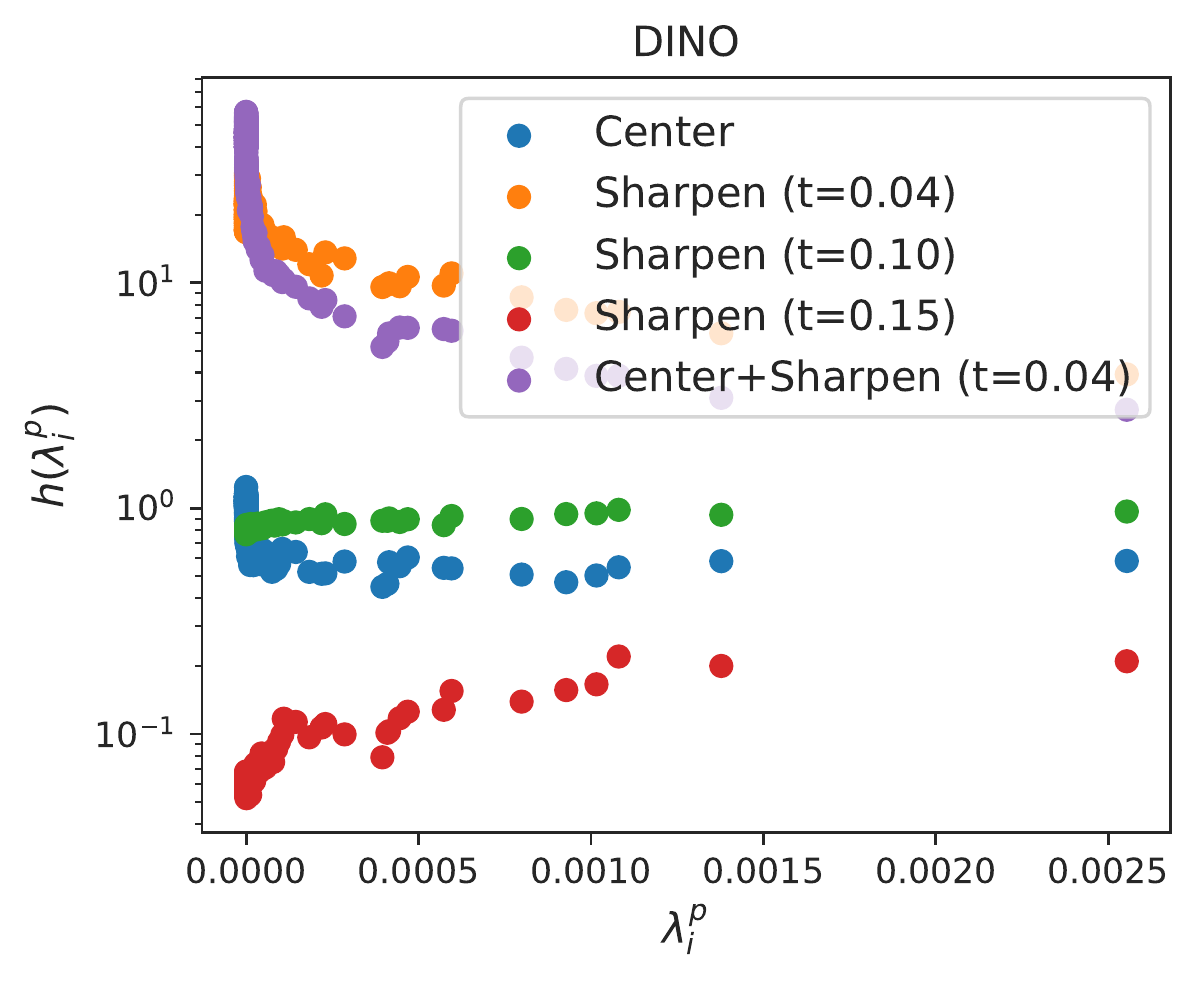}
\label{fig:dino_transformations}
\end{minipage}
}
\vspace{-0.2cm}
\caption{The role of asymmetric designs. \subref{fig:byol-epsilon/lambda} \& \subref{fig:simsiam-epsilon/lambda}: spectral filters of the ideal optimal predictors, $g^*(\lambda_i^z)=\lambda^+_i/ \lambda_i^z$ (Eq.~\ref{eq:optimal-learnable-predictor}). Both filters calculated from BYOL and SimSiam are almost monotonically decreasing. 
\subref{fig:swav-sinkhorn_iterations}: The target filters using different Sinkhorn-Knopp (SK) iterations in SwAV. \subref{fig:dino_transformations}: The target filters using centering and/or sharpening with different target temperatures ($t$) in DINO (the online temperature is set to 0.1). }    %
\label{fig:asymmetric design}
\vspace{-0.2cm}
\end{figure}

\textbf{Case I. Online Predictor.}
\label{sec:online-predictor}
One popular kind of non-contrastive method, including BYOL and SimSiam, utilizes a learnable online predictor $g_\theta:\sR^k\to\sR^k$ for architectural asymmetry. One would wonder why such a \emph{learnable} predictor will {behave} as a low-pass filter (Lemma \ref{thm:spectral-filter}). Here, we provide some theoretical insights in the following theorem. 
\begin{theorem}
Under Hypothesis \ref{mechanism1}, assume the invertibility of $\sC_z$, the optimal predictor is given by 
    \begin{equation}
     W^*=\sC_{+}\sC_z^{-1}=V\Omega V^\top, ~~\text{ where }\omega_i=\Omega_{ii}={\lambda^+_i}/{\lambda^z_i}
     \in[0,1], i\in[k].
     \label{eq:optimal-learnable-predictor}
    \end{equation}
    Therefore, the spectral property of the learnable filter is determined by the filter function ${\lambda^+_i}/{\lambda^z_i}$.
    \label{thm:optimal-predictor}
\end{theorem}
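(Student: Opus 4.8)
The plan is to treat the linear online predictor $W\in\sR^{k\times k}$ as an unconstrained least-squares variable, minimize the MSE alignment loss in closed form, diagonalize the minimizer via Hypothesis~\ref{mechanism1}, and then bound the resulting eigenvalue ratios using the positive-pair structure of the data.

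First I would make the objective explicit. In the predictor case $p_x=Wz_x$, and since $\sg(\cdot)$ freezes the target, the MSE alignment loss (BYOL/SimSiam instance of Eq.~\ref{eq:general-alignment-loss}) is $L(W)=\E_{x,x^+}\|Wz_x-z_{x^+}\|^2$. Expanding the square and using linearity of expectation together with $\E_{x,x^+}[z_x^\top W^\top Wz_x]=\Tr(W\sC_zW^\top)$ and $\E_{x,x^+}[z_{x^+}^\top Wz_x]=\Tr(W\sC_+)$ gives
\begin{equation}
 L(W)=\Tr(W\sC_zW^\top)-2\Tr(W\sC_+)+\Tr(\sC_z),
\end{equation}
where $\sC_+=\E_{x,x^+}z_xz_{x^+}^\top$ is symmetric because $(x,x^+)$ is exchangeable ($\gP(x,x^+)=\gP(x^+,x)$). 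The map $W\mapsto L(W)$ has Hessian $2\,I\otimes\sC_z\succeq 0$, so it is convex, and strictly convex since $\sC_z$ is invertible (hence positive definite); therefore its unique minimizer solves $\nabla_WL=2W\sC_z-2\sC_+=0$, i.e. $W^*=\sC_+\sC_z^{-1}$.

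Next I would diagonalize and bound. Hypothesis~\ref{mechanism1} provides an orthogonal $V$ with $\sC_z=V\Lambda_zV^\top$ and $\sC_+=V\Lambda_+V^\top$, so $W^*=V\Lambda_+V^\top(V\Lambda_zV^\top)^{-1}=V\Lambda_+\Lambda_z^{-1}V^\top=V\Omega V^\top$ with $\omega_i=\lambda^+_i/\lambda^z_i$. To see $\omega_i\in[0,1]$, use the setup $\gP(x,x^+)=\E_{\bar x}\gA(x|\bar x)\gA(x^+|\bar x)$: conditional independence of $x,x^+$ given $\bar x$ yields $\sC_+=\E_{\bar x}[\mu_{\bar x}\mu_{\bar x}^\top]$ with $\mu_{\bar x}=\E_{x\sim\gA(\cdot|\bar x)}z_x$, which is positive semidefinite, so $\lambda^+_i\geq 0$. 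For the upper bound, writing $v_i$ for the $i$-th column of $V$ and applying Jensen's inequality conditionally on $\bar x$, $\lambda^+_i=\E_{\bar x}[(v_i^\top\mu_{\bar x})^2]\leq\E_{\bar x}\E_{x|\bar x}[(v_i^\top z_x)^2]=\E_x[(v_i^\top z_x)^2]=v_i^\top\sC_zv_i=\lambda^z_i$, using that the marginal law of $x$ is $\E_{\bar x}\gA(\cdot|\bar x)$; since $\lambda^z_i>0$ by invertibility of $\sC_z$, we conclude $\omega_i=\lambda^+_i/\lambda^z_i\in[0,1]$, which is the claim.

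The main obstacle — essentially the only step that is not bookkeeping — is the two-sided bound $0\leq\lambda^+_i\leq\lambda^z_i$: it is precisely what forces $W^*$ to contract along every eigendirection (a low-pass filter on $z_x$), and it is where the structure of positive pairs (two augmentations of the same natural image, hence conditionally independent given $\bar x$) genuinely enters, rather than any generic relation between $\sC_z$ and $\sC_+$. A minor point to flag is that, following the linearized treatment of \citet{tian2021understanding}, the computation is performed on the pre-normalization predictor output so that the least-squares minimizer is well defined; one then checks that $V$ also diagonalizes $\sC_{p^*}=W^*\sC_zW^{*\top}=V\Omega^2\Lambda_zV^\top$, consistent with Hypothesis~\ref{mechanism1}.
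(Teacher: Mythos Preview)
Your proof is correct and follows essentially the same route as the paper: derive $W^*=\sC_+\sC_z^{-1}$ by first-order optimality of the quadratic in $W$, diagonalize via Hypothesis~\ref{mechanism1}, and then bound $\lambda^+_i/\lambda^z_i\in[0,1]$ using the positive-pair structure. The only cosmetic difference is in the upper bound: the paper invokes the variance decomposition $\sC_z=\sC_++\sV_{x|\bar x}$ from Lemma~\ref{lem:C_+} (so $\lambda^z_i-\lambda^+_i\geq 0$ because $\sV_{x|\bar x}\succeq 0$), whereas you apply Jensen's inequality to $(v_i^\top\mu_{\bar x})^2$, which is the same inequality projected onto a single eigendirection.
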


Theorem \ref{thm:optimal-predictor} shows that the predictor essentially learns to predict the correlation $\sC_+$ between positive samples from one augmented view $\sC_z$, \ie eliminating the augmentation noise and predicting the common features. The following lemma further reveals that 1) the correlation between positive samples is equivalent to the correlation of their underlying natural data $\bar x$, and 2) the difference between $\sC_+$ and $\sC_z$ is equal to the conditional covariance induced by data augmentation $\gA(x|\bar x)$.
{
\begin{lemma}\label{lem:C_+}
The following equalities hold:
\begin{enumerate}
\item $\sC_+=\bar \sC:=\E_{\bar x}z_{\bar x}z^\top_{\bar x}$, where $z_{\bar{x}}=\E_{x|\bar x}z_x$;
\item $\sC_{z}=\bar\sC+\sV_{x|\bar x}$, where $\sV_{x|\bar x}=\E_{\bar x}\E_{x|\bar x}\left(z_x-z_{\bar x}\right)\left(z_x-z_{\bar x}\right)^\top$ is the conditional covariance.
\end{enumerate}
\end{lemma}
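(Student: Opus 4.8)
The plan is to prove both equalities by direct manipulation of expectations, using the tower property of conditional expectation and the independence structure of the positive-pair distribution $\gP(x,x^+)=\E_{\bar x}\gA(x|\bar x)\gA(x^+|\bar x)$. The key observation is that, conditionally on the natural sample $\bar x$, the two augmented views $x$ and $x^+$ are i.i.d.\ draws from $\gA(\cdot|\bar x)$, so their features $z_x$ and $z_{x^+}$ are conditionally independent given $\bar x$. This is the only probabilistic fact needed; everything else is bookkeeping.

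\textbf{Part 1.} First I would write $\sC_+=\E_{x,x^+}z_xz_{x^+}^\top=\E_{\bar x}\,\E_{x,x^+\mid\bar x}\,z_xz_{x^+}^\top$ by conditioning on $\bar x$. Since $x$ and $x^+$ are conditionally independent given $\bar x$, the inner expectation factorizes: $\E_{x,x^+\mid\bar x}z_xz_{x^+}^\top=\bigl(\E_{x\mid\bar x}z_x\bigr)\bigl(\E_{x^+\mid\bar x}z_{x^+}\bigr)^\top=z_{\bar x}z_{\bar x}^\top$, using the definition $z_{\bar x}=\E_{x\mid\bar x}z_x$ and that both views share the same conditional law. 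Taking the outer expectation over $\bar x$ gives $\sC_+=\E_{\bar x}z_{\bar x}z_{\bar x}^\top=\bar\sC$.

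\textbf{Part 2.} For the second equality I would start from $\sC_z=\E_x z_xz_x^\top=\E_{\bar x}\,\E_{x\mid\bar x}z_xz_x^\top$ and insert the centered term via the identity $z_x=z_{\bar x}+(z_x-z_{\bar x})$. Expanding the outer product, the cross terms vanish in conditional expectation because $\E_{x\mid\bar x}(z_x-z_{\bar x})=0$ by definition of $z_{\bar x}$: concretely, $\E_{x\mid\bar x}z_x z_{\bar x}^\top = z_{\bar x}z_{\bar x}^\top$ and likewise for the transpose. What remains is $\E_{x\mid\bar x}z_xz_x^\top = z_{\bar x}z_{\bar x}^\top + \E_{x\mid\bar x}(z_x-z_{\bar x})(z_x-z_{\bar x})^\top$; taking $\E_{\bar x}$ of both sides yields $\sC_z=\bar\sC+\sV_{x\mid\bar x}$ exactly as claimed.

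\textbf{Main obstacle.} There is no serious obstacle here — the lemma is essentially a law-of-total-covariance decomposition dressed up for the feature map $z$, and the proof is two short computations. The only thing to be careful about is keeping the conditioning explicit (the distinction between $\E_x$ over the augmentation-marginal and $\E_{\bar x}$ over natural data) and invoking conditional independence of the two views at exactly the right spot in Part 1; misplacing it would incorrectly collapse $\sC_z$ rather than $\sC_+$. I would also note in passing that this lemma makes no use of the isotropy or linearity assumptions from the simplified setting — it holds for any feature map $z$ — which is worth stating since Theorem~\ref{thm:optimal-predictor} combines it with those extra assumptions.
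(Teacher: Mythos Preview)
Your proposal is correct and matches the paper's proof essentially line for line: Part~1 conditions on $\bar x$, factorizes by conditional independence, and takes the outer expectation; Part~2 is the same add-and-subtract-$z_{\bar x}$ expansion (the paper just runs it in the reverse direction, starting from $\bar\sC+\sV_{x|\bar x}$ and simplifying to $\sC_z$). Your side remark that the lemma does not require the isotropy/linearity assumptions is also accurate.
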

}
In practice, data augmentations mainly cause high-frequency noises in the feature space, therefore, the denoising predictor will behave as a low-pass filter. Indeed, Figures \ref{fig:byol-epsilon/lambda} and \ref{fig:simsiam-epsilon/lambda} empirically show that the filter derived from our theory, $\lambda^+_i/\lambda^z_i$, aligns well with the actual learned predictor in Figure \ref{fig:m2-target-online} as a low-pass filter.

\textbf{Case II. Target Transformation.}
Another kind of non-contrastive method is to apply hand-crafted (instead of learned) transformations in the target branch, such as the Sinkhorn-Knopp (SK) iteration in SwAV \citep{swav} and centering-sharpening operators in DINO \citep{dino}. In this case, we further study how these transformations behave as high-pass filters applied to the target output (see footnote of Hypothesis \ref{mechanism2}). Since it is generally hard to analyze these spatial transformations in the spectral domain, here we empirically study the role of each transformation on the resulting filter. Figure \ref{fig:swav-sinkhorn_iterations} shows that SK iterations indeed act like low-pass filters, and one iteration is enough. This explains why a few SK iterations already work well in SwAV. As for DINO, we notice that centering operation alone is not enough to produce a high-pass filter, which agrees with empirical results in DINO. Meanwhile, we notice that in order to obtain a high-pass filter (monotonically decreasing), it is necessary for DINO to apply a temperature smaller than the online branch ($<0.1$), which is exactly the feature sharpening technique adopted in DINO \citep{dino}. These facts show our theory aligns well with empirical results in non-contrastive learning.

\subsection{Asymmetry-induced Low-pass Filters Save Non-contrastive Learning
}

In the discussion above, we observed a common pattern in existing asymmetric designs: their corresponding spectral filters are all low-pass. Here, we further show that this property is so essential that it can \emph{provably} save non-contrastive learning from the risk of feature collapse by producing the rank difference between outputs and alleviating dimensional collapse during training.

First, let us take a look at its effect on the effective rank of two-branch output features.
In the theorem below, we show that low-pass online filters are guaranteed to produce a consistently higher effective rank of target features than that of online features, as shown in Figure \ref{fig:erank-cifar10}.
\begin{theorem}
If $g(\lambda^z_i)=\sqrt{\lambda^p_i/\lambda^z_i}$ is monotonically increasing, or equivalently, $h(\lambda^p_i)=\sqrt{\lambda^z_i/\lambda^p_i}$ is monotonically decreasing, we have
$\operatorname{erank}(\sC_p)\leq\operatorname{erank}(\sC_z)$. Further, as long as $g(\lambda^z_i)$ or $h(\lambda^p_i)$ is non-constant, the inequality holds strictly, $\operatorname{erank}(\sC_p)<\operatorname{erank}(\sC_z)$.
\label{thm:low-pass}
\end{theorem}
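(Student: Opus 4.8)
The plan is to reduce the statement to a comparison of Shannon entropies and then settle it by a majorization argument. Since $\operatorname{erank}(\sC)=\exp(H(q))$ with $q$ the normalized-eigenvalue vector and $\exp$ strictly increasing, it suffices to prove $H(q^p)\le H(q^z)$ (strictly, when required), where $q^p_i=\lambda^p_i/\sum_j\lambda^p_j$ and $q^z_i=\lambda^z_i/\sum_j\lambda^z_j$. By Lemma~\ref{thm:spectral-filter} the matrices $\sC_p,\sC_z$ share an eigenbasis and $\lambda^p_i=g(\lambda^z_i)^2\lambda^z_i$; set $w_i:=\lambda^p_i/\lambda^z_i=g(\lambda^z_i)^2$ for indices with $\lambda^z_i>0$ (indices with $\lambda^z_i=0$ also have $\lambda^p_i=0$ and contribute nothing to either entropy, so discard them). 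Because $g\ge 0$ is monotonically increasing and $\lambda^z_1\ge\cdots\ge\lambda^z_k$, the sequence $(w_i)_i$ is non-increasing in $i$; this condition is literally the $h$-reformulation (''$h(\lambda^p)=\sqrt{\lambda^z/\lambda^p}$ monotonically decreasing''), so the two hypotheses coincide and it is enough to argue from one.

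First I would note that $\lambda^p_i=w_i\lambda^z_i$ is itself non-increasing in $i$ (a product of two non-negative non-increasing sequences), so $q^p$ and $q^z$ are both already sorted in decreasing order. The core step is to show $q^p$ majorizes $q^z$, i.e.\ $\sum_{i\le m}q^p_i\ge\sum_{i\le m}q^z_i$ for every $m$, with equality at $m=k$. Clearing positive denominators, this is equivalent to
\[
\Big(\sum\nolimits_{i\le m}w_i\lambda^z_i\Big)\Big(\sum\nolimits_{i>m}\lambda^z_i\Big)\ \ge\ \Big(\sum\nolimits_{i\le m}\lambda^z_i\Big)\Big(\sum\nolimits_{i>m}w_i\lambda^z_i\Big),
\]
which asserts that the $\lambda^z$-weighted average of $w$ over the head $\{1,\dots,m\}$ is at least its $\lambda^z$-weighted average over the tail $\{m+1,\dots,k\}$ — true because $w_i\ge w_j$ whenever $i\le m<j$. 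Majorization together with Schur-concavity of Shannon entropy then yields $H(q^p)\le H(q^z)$, hence $\operatorname{erank}(\sC_p)\le\operatorname{erank}(\sC_z)$.

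For strictness, assume $(g(\lambda^z_i))_i$ — equivalently $(w_i)_i$ — is non-constant on the support $\{i:\lambda^z_i>0\}$. If $q^p_i=q^z_i$ held for all $i$, then $w_i\equiv(\sum_jw_j\lambda^z_j)/(\sum_j\lambda^z_j)$ would be constant, a contradiction; so $q^p\ne q^z$, and since both are sorted decreasing they are not permutations of each other. Strict Schur-concavity of Shannon entropy then upgrades the bound to $H(q^p)<H(q^z)$, i.e.\ $\operatorname{erank}(\sC_p)<\operatorname{erank}(\sC_z)$.

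I expect the obstacles to be bookkeeping rather than conceptual: (i) translating ''$g$ monotonically increasing in $\lambda^z$'' into ''$(w_i)_i$ non-increasing in $i$'' with the correct orientation (recall $\lambda^z$ is indexed in decreasing order), and verifying the claimed equivalence with the $h$-version; (ii) handling zero eigenvalues and ties so $w$ and the majorization manipulation are well defined; and (iii) pinning down that ''non-constant'' means non-constant on the support. As a self-contained alternative to the Schur-concavity route, one can use the tilted family $q^{(t)}_i\propto\lambda^z_i\,w_i^{\,t}$, $t\in[0,1]$, interpolating from $q^z$ to $q^p$, and compute
\[
\tfrac{d}{dt}H(q^{(t)})=-\Cov_{q^{(t)}}(\log w,\log\lambda^z)-t\,\Var_{q^{(t)}}(\log w)\le 0,
\]
where the covariance is non-negative because $\log w_i$ and $\log\lambda^z_i$ are comonotone (Chebyshev's sum inequality) and the variance term gives strictness when $w$ is non-constant; I would present the majorization argument as the main proof since it is the most elementary, and mention this tilting computation as a remark.
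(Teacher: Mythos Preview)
Your proof is correct. Both your argument and the paper's ultimately rest on the same majorization idea, but the packaging differs. The paper proceeds from scratch: it first establishes the ratio inequality $q^p_i/q^p_j\ge q^z_i/q^z_j$ for $i<j$, deduces the existence of a threshold index $m$ with $q^p_i\ge q^z_i$ iff $i\le m$, and then runs a hand-built ``transportation'' process, repeatedly moving mass from a tail coordinate to a head coordinate, invoking an elementary two-coordinate lemma (their Lemma~5) that each such Robin Hood transfer strictly lowers entropy. This is precisely a bare-hands proof of Schur-concavity of $H$ along a majorization path.

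Your route is more economical: you prove the partial-sum majorization $\sum_{i\le m}q^p_i\ge\sum_{i\le m}q^z_i$ directly via the weighted-average (Chebyshev-type) comparison of $w$ on head versus tail, and then cite Schur-concavity of Shannon entropy. This is shorter and cleaner, at the cost of invoking a named result rather than deriving it in place; the paper's version is fully self-contained but longer. Your handling of zero eigenvalues and the strictness case is also tidier than the paper's (which tacitly assumes all $q_i>0$). The tilted-family remark with $\tfrac{d}{dt}H(q^{(t)})=-\Cov_{q^{(t)}}(\log w,\log\lambda^z)-t\,\Var_{q^{(t)}}(\log w)$ is a genuinely different and elegant alternative not present in the paper; it gives a one-line monotonicity proof once the exponential-family derivative identity is accepted.
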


Meanwhile, we also observe that for each output, its own effective rank is also successfully elevated along this process. This is not a coincidence.
Below, we theoretically show how the rank difference alleviates dimensional collapse.
As the training dynamics of deep neural networks is generally hard to analyze, prior works \citep{tian2021understanding,wen2021toward} mainly deal with linear or shadow networks with strong assumptions on data distribution, which could be far from practice. As over-parameterized deep neural networks are very expressive, we instead adopt the unconstrained feature setting \citep{mixon2022neural,haochen} and consider gradient descent directly in the feature space $\sR^{k}$. Taking the MSE loss $\ell(u,v)=\frac{1}{2}\|u-v\|^2$ as an example, the following theorem shows that the rank difference indeed helps improve the effective rank of online output $p$.

\begin{theorem}
\label{thm: lambda}
{Under Hypothesis \ref{mechanism1}, when we apply an online spectral filter $p_x=Wz_x$ (Lemma \ref{thm:spectral-filter}), gradient descent with step size $0<\alpha<1$ gives the following update at the $t$-th step,
\begin{equation}
{\lambda^p_{i,t+1}}{}= \lambda^p_{i,t}\left((1-\alpha)^2+ \alpha^2h^2(\lambda^p_{i,t})  +2\alpha(1-\alpha)h(\lambda^p_{i,t})\frac{\lambda^+_{i,t}}{\lambda^z_{i,t}}\right), \ i=1,\dots,k,
\end{equation}
where $h(\lambda)=1/g(\lambda)$ is a high-pass filter because $g(\lambda)$ is a low-pass filter (Hypothesis \ref{mechanism2}). Then, the update $\lambda^p_{i,t+1}/\lambda^p_{i,t}$ will nicely correspond to a high-pass filter under either of the two conditions:
\begin{enumerate}
    \item the learned encoder is nearly optimal, \ie $W\approx W^*$ in Theorem \ref{thm:optimal-predictor}.
    \item $\lambda^+_{i,t}\approx\lambda^z_{i,t}$, which naturally holds with good positive alignment, \ie $z_x\approx z_{x^+}$.
\end{enumerate}
Then, according to Theorem \ref{thm:low-pass}, we have $\erank(\sC_{p^{(t+1)}})>\erank(\sC_{p^{(t)}})$. In other words, the effective rank of online output will keep improving after gradient descent.
}
\label{thm:online-update-general}
\end{theorem}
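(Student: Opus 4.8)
The plan is to derive the update equation by writing out one step of gradient descent on the MSE alignment loss in the unconstrained feature setting, then diagonalize in the common eigenbasis guaranteed by Hypothesis~\ref{mechanism1}, and finally appeal to Theorem~\ref{thm:low-pass}. First I would fix the eigenbasis $V$ shared by $\sC_p$, $\sC_z$, $\sC_+$ and work coordinate-wise along each eigendirection $u_i$. With $p_x = W z_x$ and $W = V g(\Lambda_z) V^\top$, the loss $L(p) = \frac{1}{2}\E_{x,x^+}\|p_x - \sg(z_{x^+})\|^2$ has gradient (in the feature $p_x$, treating the target as fixed) proportional to $p_x - z_{\bar x}$ after taking the expectation over $x^+ \mid x$, using $\E_{x^+\mid x}z_{x^+} = z_{\bar x}$ from Lemma~\ref{lem:C_+}. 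A gradient step with size $\alpha$ gives $p_x^{(t+1)} = (1-\alpha)p_x^{(t)} + \alpha z_{\bar x}$. The key algebraic step is to compute $\sC_{p^{(t+1)}} = \E_x p_x^{(t+1)}(p_x^{(t+1)})^\top$ by expanding the square: this produces three terms, $(1-\alpha)^2\sC_{p^{(t)}}$, $\alpha^2 \bar\sC$, and a cross term $2\alpha(1-\alpha)\E_x p_x^{(t)} z_{\bar x}^\top$. Using Lemma~\ref{lem:C_+} ($\bar\sC = \sC_+$) and the relation $p_x = Wz_x$ with $W$ diagonalized by $V$, each term is diagonal in the $V$-basis, and projecting onto $u_i$ yields exactly the three summands in the stated recursion once one substitutes $\lambda^p_i = g^2(\lambda^z_i)\lambda^z_i$, i.e.\ $h(\lambda^p_i) = \sqrt{\lambda^z_i/\lambda^p_i}$, to rewrite $\sqrt{\lambda^z_i \lambda^p_i} = \lambda^p_i h(\lambda^p_i)$ in the cross term.

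Next I would analyze the multiplicative factor $r(\lambda^p_{i,t}) := \lambda^p_{i,t+1}/\lambda^p_{i,t} = (1-\alpha)^2 + \alpha^2 h^2(\lambda^p_{i,t}) + 2\alpha(1-\alpha)h(\lambda^p_{i,t})\,\lambda^+_{i,t}/\lambda^z_{i,t}$ and argue it is a decreasing function of $\lambda^p_{i,t}$ (hence a high-pass filter) under either stated condition. Under condition~2, $\lambda^+_{i,t}\approx\lambda^z_{i,t}$, so $r \approx ((1-\alpha) + \alpha h(\lambda^p_{i,t}))^2$; since $h$ is decreasing (it is the reciprocal of the low-pass $g$ from Hypothesis~\ref{mechanism2}) and $0<\alpha<1$ keeps the inner bracket nonnegative, $r$ is decreasing in $\lambda^p_{i,t}$. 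Under condition~1, $W\approx W^*$ from Theorem~\ref{thm:optimal-predictor} gives $g^2(\lambda^z_i) = \lambda^+_i/\lambda^z_i$, hence $h^2(\lambda^p_i) = \lambda^z_i/\lambda^+_i$ and $h(\lambda^p_i)\lambda^+_i/\lambda^z_i = \sqrt{\lambda^+_i/\lambda^z_i} = g(\lambda^z_i)$; substituting, $r = (1-\alpha)^2 + \alpha^2 \lambda^z_i/\lambda^+_i + 2\alpha(1-\alpha)g(\lambda^z_i)$, and one checks monotonicity in $\lambda^p_i$ by noting $\lambda^z_i/\lambda^+_i = 1/g^2$ is decreasing in $\lambda^p_i = g^2 \lambda^z_i$ while $g$ increasing in $\lambda^z_i$ translates to $r$ being antitone in $\lambda^p_i$ over the relevant range (the constant term is harmless). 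In either case, the map $\lambda^p_{i,t} \mapsto \lambda^p_{i,t+1}$ is obtained by multiplying by a nonnegative high-pass factor, which is precisely the hypothesis of Theorem~\ref{thm:low-pass} applied with the roles of ``online/target'' replaced by ``step $t$ / step $t+1$''. That theorem then delivers $\erank(\sC_{p^{(t+1)}}) > \erank(\sC_{p^{(t)}})$, strictly whenever the filter is non-constant.

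The main obstacle I anticipate is the passage from ``$r(\cdot)$ is a decreasing function'' to the literal hypothesis of Theorem~\ref{thm:low-pass}: that theorem is phrased for a filter $g$ with $\lambda^g_i = \sqrt{\lambda^p_i/\lambda^z_i}$ acting between two correlation matrices sharing an eigenbasis, so I need to verify that $\sC_{p^{(t)}}$ and $\sC_{p^{(t+1)}}$ genuinely share the eigenbasis $V$ (which they do, since every term in the expansion is diagonal in $V$ under Hypothesis~\ref{mechanism1}) and that the square-root normalization in the statement of Theorem~\ref{thm:low-pass} is matched — concretely, Theorem~\ref{thm:low-pass} only needs the ratio of consecutive eigenvalues to be monotone in the eigenvalue, which is exactly monotonicity of $r$, so the bookkeeping is cosmetic but must be spelled out. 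A secondary subtlety is that the two ``conditions'' are stated as approximations ($W\approx W^*$, $\lambda^+\approx\lambda^z$); I would make the monotonicity conclusion exact under the idealized equalities and note that it is preserved under sufficiently small perturbations by continuity of $r$ and of $\erank$ in the eigenvalues. The gradient computation itself and the expansion of the quadratic are routine and I would not belabor them.
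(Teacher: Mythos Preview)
Your overall plan is right, but the derivation of the update formula has a genuine discrepancy with the statement. You take the \emph{expected} gradient over $x^+\mid x$ first, getting $p_x^{(t+1)}=(1-\alpha)p_x^{(t)}+\alpha z_{\bar x}$; expanding the outer product then gives a quadratic term $\alpha^2\bar\sC=\alpha^2\sC_+$. But the formula in the theorem has $\alpha^2 h^2(\lambda^p_{i,t})\lambda^p_{i,t}=\alpha^2\lambda^z_{i,t}$, i.e.\ the quadratic term must be $\alpha^2\sC_z$, not $\alpha^2\sC_+$. The paper obtains this by using the single-sample update $p_x^{(t+1)}=(1-\alpha)p_x^{(t)}+\alpha z_{x^+}^{(t)}$ and then taking the expectation of the outer product over the \emph{joint} law of $(x,x^+)$; the square of $\alpha z_{x^+}$ then contributes $\alpha^2\E_{x^+}z_{x^+}z_{x^+}^\top=\alpha^2\sC_z$, while the cross term still reduces to $W\sC_++\sC_+W^\top$. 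The gap between the two approaches is exactly $\alpha^2(\sC_z-\sC_+)=\alpha^2\sV_{x\mid\bar x}$ from Lemma~\ref{lem:C_+}, which is generally nonzero, so your route does not reproduce the stated recursion.

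There is also an algebraic slip in your condition~1 analysis. Theorem~\ref{thm:optimal-predictor} gives the optimal filter value $g(\lambda^z_i)=\omega_i=\lambda^+_i/\lambda^z_i$, not $g^2(\lambda^z_i)=\lambda^+_i/\lambda^z_i$. With the correct identification, $h(\lambda^p_i)=\lambda^z_i/\lambda^+_i$, so the cross term becomes $2\alpha(1-\alpha)h\cdot(\lambda^+/\lambda^z)=2\alpha(1-\alpha)$ and the ratio collapses to $r=(1-\alpha)^2+2\alpha(1-\alpha)+\alpha^2 h^2=1-\alpha^2+\alpha^2 h^2$, which is transparently decreasing in $\lambda^p$ since $h$ is. Your condition~2 argument and your remarks on applying Theorem~\ref{thm:low-pass} to the pair $(\sC_{p^{(t)}},\sC_{p^{(t+1)}})$ are fine.
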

Intuitively, the improvement of effective rank is a natural consequence of the rank difference. As the online output has a lower effective rank than the target output, optimizing its alignment loss \wrt the gradient-detached target $\sg(z_x)$ will enforce the online output $p_x$ to improve its effective rank in order to match the target output $z_x$. In this way, the rank difference becomes a ladder (created by asymmetric designs) for non-contrastive methods to gradually improve its effective feature dimensionality and get rid of dimensional feature collapse eventually.

\textbf{Note on Stop Gradient.} Our analysis above also reveals the importance of the stop gradient operation. In particular, when gradients from the target branch are not detached, in order to match the rank of two outputs, minimizing the alignment loss can also be fulfilled by simply \textit{pulling down} the rank of the target output. In this case, the feature rank will never be improved without stop gradient.

\section{Principled Asymmetric Designs Based on RDM}
\label{sec:asymmetry-designs}
The discussions above show that the rank differential mechanism provides a unified theoretical understanding of different non-contrastive methods. Besides, it also provides a general recipe for designing \textit{new} non-contrastive variants. As Theorem \ref{thm:low-pass} points out, the key requirement is that the asymmetry can produce a low-pass filter on the online branch, or equivalently, a high-pass filter on the target branch. Below, we propose some new variants directly following this principle. 

\begin{table}[t]
\caption{Linear probing accuracy (\%) of SimSiam with different \textbf{online} predictors {(including learnable nonlinear and linear predictors of SimSiam)} on CIFAR-10, CIFAR-100 and ImageNet-100.}
\label{table: accuracy of online predictor}
\begin{center}
\setlength{\tabcolsep}{4.5pt}
\begin{tabular}{lcccccccc}
\toprule
\multicolumn{1}{c}{\multirow{2}{*}{}} & \multicolumn{3}{c}{CIFAR-10} & \multicolumn{3}{c}{CIFAR-100} & \multicolumn{2}{c}{IN-100 (400ep)} \\
\multicolumn{1}{c}{Predictor} & 100ep   & 200ep   & 400ep   & 100ep    & 200ep   & 400ep   & acc@1   & acc@5 \\
\midrule
{SimSiam (nonlinear)}        & 74.00	  & 84.39	& 89.41	  &46.12	& 55.51	  & 63.39   & 77.96    & 94.26 \\
\midrule
{SimSiam (linear)}          & 72.42	  & \textbf{83.67}	& \textbf{88.9}	  & 42.98	& \textbf{55.69}	  & \textbf{62.02}   & 77.18   & 94.02 \\
$g(\sigma)=\sigma$          &73.91	  & 82.74	& 86.51	  & \textbf{46.73}	& 54.45	  & 59.59   & 75.06     & 93.28 \\
$g(\sigma)=\log(\sigma)$    & 74.80	 & 80.72	& 87.52	  & 39.82  & 48.59	& 58.08    & 77.22    & \textbf{94.34}\\
$g(\sigma)=\log(1+\sigma)$  & \textbf{75.70}	&80.47	&86.76	&40.53	&47.68  &61.15 & \textbf{77.42}	& 93.90 \\  
$g(\sigma)=\log(1+\sigma^2)$&75.51 &  82.28	       &87.51   &  39.19	&48.97     & 59.06 &77.22      &94.00\\
\bottomrule
\end{tabular}
\end{center}
\end{table}

\begin{table}[t]
\caption{Linear probing accuracy (\%) of SimSiam with different \textbf{target} predictors {(including learnable nonlinear and linear predictors)} on CIFAR-10, CIFAR-100 and ImageNet-100.}
\label{table: accuracy of target predictor}
\begin{center}
\setlength{\tabcolsep}{4.5pt} %
\begin{tabular}{lcccccccc}
\toprule
\multicolumn{1}{c}{\multirow{2}{*}{}} & \multicolumn{3}{c}{CIFAR-10} & \multicolumn{3}{c}{CIFAR-100}& \multicolumn{2}{c}{IN-100 (400ep)} \\
\multicolumn{1}{c}{Predictor}                  & 100ep   & 200ep   & 400ep   & 100ep    & 200ep   & 400ep  & acc@1   & acc@5 \\
\midrule
{SimSiam (nonlinear)} & 74.00	  & 84.39	& 89.41  &46.12	& 55.51	  & 63.39  & 77.96    & 94.26 \\
\midrule
{SimSiam (linear)}    & 72.42	  & 83.67	&\textbf{ 88.90}	  & 42.98	& 55.69	  & 62.02   & 77.18   & 94.02 \\
$h(\sigma)=\sigma^{-0.3}$   & 77.33	  & 85.57	& 88.74	  & 48.07 	& 58.07	  & \textbf{62.83}  &\textbf{ 77.68}     & 94.16 \\
$h(\sigma)=\sigma^{-0.5}$   & \textbf{78.17}  & 85.52	& 88.43	  &50.61	& 58.68	  & 62.79    & 76.74   & 94.10  \\
$h(\sigma)=\sigma^{-0.7}$   & 77.98	  & 85.79	& 88.45	  & 51.04	& 58.37	  & 62.69  & 77.28        & \textbf{94.94}\\
$h(\sigma)=\sigma^{-1}$     & 77.33	  &\textbf{86.44}	& 88.50	  & \textbf{51.05}	&\textbf{58.87}  & 61.92    &  77.00          & 93.32\\
\bottomrule
\end{tabular}
\end{center}
\end{table}

\subsection{Variants of Online Low-pass Filters}
Despite the learnable predictor (Section \ref{sec:online-predictor}), we can also directly design online predictors with fixed low-pass filters. For numerical stability, we adopt the singular value decomposition (SVD) of the output to compute the eigenvalues and the eigenspace, \eg $z=U\Sigma^z V^\top$ with singular values $\sigma^z_i$'s. As $\lambda^z_i=(\sigma^{z_i})^2$, a filter that is monotone in $\lambda$ is also monotone in $\sigma$, and vice versa. Specifically, for an online encoder $f:\gX\to\sR^k$, we assign $W=Vg(\Sigma^z)V^\top$, where $g(\Sigma^z)_{ii}=g(\sigma^z_i)$ is a low-pass filter that is monotonically increasing \wrt $\sigma$. We note that DirectPred proposed by \citet{tian2021understanding} is a special case with $g_0(\sigma)=\sigma$. Additionally, we consider three variants: 1)  $g_1(\sigma)=\log(\sigma)$; 2) $g_2(\lambda)=\log(\sigma+1)$; 3) $g_3(\sigma)=\log(\sigma^2+1)$. These three new variants are low-pass filters because they are monotonically increasing with $\sigma\geq0$.

We evaluate their performance on four datasets, \ie CIFAR-10, CIFAR-100 \citep{cifar}, ImageNet-100 and ImageNet-1k \citep{deng2009imagenet}. 
We use ResNet-18 \citep{resnet} as the backbone encoder for CIFAR-10, CIFAR-100 and ImageNet-100 and adopt ResNet-50 for Imagenet-1k following standard practice. And the projector is a three-layer MLP in which BN \citep{batchnorm} is applied to all layers. 
We adopt the linear probing task for evaluating the learned representations. More details are included in Appendix \ref{sec:implementation-details}. 
Table \ref{table: accuracy of online predictor} shows that our designs of online predictors with different low-pass filters work well in practice, and achieve comparable performance to SimSiam with learnable predictors. In particular, it also significantly outperforms SymSimSiam (Figure \ref{fig:symsimsiam-acc}), showing that rank differences indeed help alleviate dimensional collapse (Figure \ref{fig:different_predictors}).

\begin{table}[t]
\caption{Linear probing accuracy (\%) of SimSiam with different predictors on ImageNet-1k. }
\center
\begin{tabular}{cccccc}
\hline
\multicolumn{2}{c}{{SimSiam}} & \multicolumn{2}{c}{Online Low-pass Filter (ours)}                     & \multicolumn{2}{c}{Target High-pass Filter (ours)}          \\
                        \multicolumn{2}{c}{(Learnable Online Filter)} & \multicolumn{2}{c}{$g(\sigma)=\log(1+\sigma)$} & \multicolumn{2}{c}{$g(\sigma)=\sigma^{-0.3}$} \\

                        acc@1                 & acc@5                & acc@1                  & acc@5                 & acc@1                  & acc@5                \\
\hline
67.97                 & 88.17                & 64.67                  & 86.13                 & 67.73                  & 88.05    \\
\hline
\end{tabular}
\end{table}
\begin{figure}[t]
    \centering
    \includegraphics[width=\linewidth]{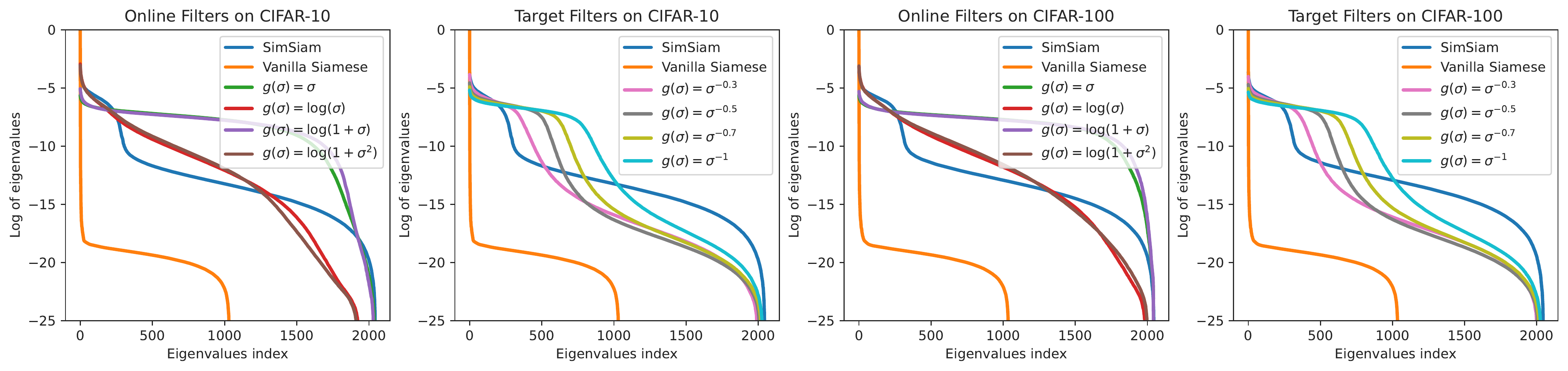}
    \caption{Eigenvalues of the correlation matrix of the normalized outputs for different variants of filters on CIFAR-10 and CIFAR-100. 
    }
    \label{fig:different_predictors}
\end{figure}

\subsection{Variants of Target High-pass Filters}
In turn, we also consider applying a high-pass filter on the target branch using a \textit{target predictor}. Compared to the online predictors above, target predictors have additional advantages: with stop gradient, we do not require backpropagation through SVD, which could reduce time overhead (Table \ref{table:comparation-of-speed-and-memory}). Specifically, we consider polynomial high-pass target filters $h(\sigma)=\sigma^p$ with different $-1 \leq p<0$, which are all monotonically decreasing (see Algorithm \ref{alg:filter}).

We further evaluate the target filters following the same protocols above. 
As shown in Table \ref{table: accuracy of target predictor}, our high-pass target filters can often outperform SimSiam by a large margin with a relatively short training time. Especially, on CIFAR-10, the filter $h(\sigma)=\sigma^{-0.5}$ improves the baseline by 5.75\% with 100 epochs and the filter $h(\sigma)=\sigma^{-1}$ improves the baseline by 2.77\% with 200 epochs. Notably, on CIFAR-100, our methods outperform the baseline by a large margin (8.07\%, 3.18\%, and 0.81\% improvements with 100, 200, and 400 epochs, respectively). Additional results based on the BYOL framework can be found in Appendix \ref{sec:resluts-on-byol}.

\section{Conclusion}

In this paper, we presented a unified theoretical understanding of non-contrastive learning via the rank differential hypothesis. In particular, we showed that existing non-contrastive learning all produce a consistent rank difference between the online and the target outputs. Digging deeper into this phenomenon, we theoretically proved that low-pass online filters can yield such a rank difference and improve the effective feature dimensionality along the training. Meanwhile, we provided theoretical and empirical insights on how existing asymmetric designs produce low-pass filters. At last, following the principle of our theory, we designed a series of new online and target filters, and showed that they achieve comparable or even superior to existing asymmetric designs.

\subsection*{Acknowledgments}
Jinwen Ma is supported by the National Key Research and Development Program of China under grant 2018AAA0100205.
Yisen Wang is partially supported by the National Natural Science Foundation of China (62006153), Open Research Projects of Zhejiang Lab (No. 2022RC0AB05), and Huawei Technologies Inc.

\bibliography{iclr2023_conference}
\bibliographystyle{iclr2023_conference}

\newpage
\appendix

\section{Omitted Proofs}

In this section, we present proofs for all lemmas and theorems in the main paper.

\subsection{Proof of Theorem \ref{thm:symsimsiam}}
\begin{proof}
Since the output of $f_\theta$ is $\ell_2$-normalized, and $\E_{x}f(x)=\E_{x^+}f(x^+)=\mu$, we have $\gL_{\rm sym}(\theta)= -\E_{x,x^+}f'_\theta(x)^\top f'_\theta(x^+)=-\E_{x,x^+}(f_\theta(x)-\mu)^\top (f_\theta(x^+)-\mu)=-\E_{x,x^+}f_\theta(x)^\top f_\theta(x^+)+\|\mu\|^2$, and $\Var(f(x))=\E_{x}\|f(x)\|^2-\|\mu\|^2=1-\|\mu\|^2$. Thus, we conclude that $\gL_{\rm sym}(\theta)=-\E_{x,x^+}f_\theta(x)^\top f_\theta(x^+)-\Var(f(x))+1$.
\end{proof}

\subsection{Proof of Lemma \ref{thm:alignment-linear-setting}}
{
\begin{proof}Let $z_x =W_f x$ and $z_{x^+}=W_f x^+$, the loss functions is 
\begin{align}
&\gL(W_f,W)=\E_{x,x^+}\frac{1}{2}\|W z_x-\sg(z_{x^+})\|^2 \\
    &= \E_{x,x^+}\frac{1}{2} \left(W z_x-\sg(z_{x^+})\right)^\top\left(W z_x-\sg(z_{x^+})\right)\\
    & =\E_{x,x^+}\frac{1}{2} \left(z_x^\top W^\top W z_x - \sg(z_{x^+})^\top W z_x -z_x^\top W^\top \sg(z_{x^+}) + \sg(z_{x^+})^\top \sg(z_{x^+})\right)\\
    & =\frac{1}{2}\E_{x,x^+} \left[\Tr\left(z_x^\top W^\top W z_x\right) -2\Tr\left( \sg(z_{x^+})^\top W z_x\right) + \Tr\left( \sg(z_{x^+})^\top\sg(z_{x^+})\right)   \right] \\
    & = \frac{1}{2}\E_{x,x^+} \left[\Tr\left( W^\top W z_x z_x^\top\right) -2\Tr\left(  W z_x \sg(z_{x^+})^\top \right) + \Tr\left( \sg(z_{x^+})\sg(z_{x^+})^\top\right)   \right]\\
    & =\frac{1}{2} \left[\Tr\left( W^\top W \E_{x}z_x z_x^\top\right) -2\Tr\left(  W \E_{x,x^+}z_x z_{x^+}^\top \right) + \Tr\left( \E_{x^+}z_{x^+}z_{x^+}^\top\right)\right]. 
\end{align}
Notice that 
\begin{align}
    \sC_z &= \E_{x}z_x z_x^\top=\E_{x^+}z_{x^+}z_{x^+}^\top \\
    &=\E_{\bar x}\E_{x|\bar x} W_fx \left(W_fx \right)^\top \\
    &=W_f \E_{\bar x}\E_{x|\bar x}xx^\top W_f^\top \\
    &= W_f \E_{\bar x}\left(\bar x{\bar x}^\top + \sigma^2 I\right)W_f^\top \\
    &= (1+\sigma^2)W_fW_f^\top,
\end{align}
and 
\begin{align}
    C_+ &=\E_{x,x^+}z_x z_{x^+}^\top \\
    &= \E_{\bar x}\E_{x,x^+|\bar x}z_x z_{x^+}^\top \\
    &= \E_{\bar x}\left(\E_{x|\bar x}W_f x\right)\left(\E_{x|\bar x^+}W_f x^+\right)^\top \\
    &= W_f\E_{\bar x}{\bar x}{\bar x}^\top W_f^\top\\
    & = W_f W_f^\top.
\end{align}
Hence,
\begin{equation}
     \gL(W_f, W)= \frac{1}{2} \left[(1+\sigma^2)\Tr\left( W^\top W W_f W_f^\top\right) -2\Tr\left(  W W_f W_f^\top \right) + (1+\sigma^2)\Tr\left( W_f W_f^\top \right)\right].
\end{equation}
Taking partial derivative with respect to $W$, we get
\begin{equation}
    \frac{\partial \gL(W_f,W)}{\partial W} = (1+\sigma^2) W W_f W_f^\top - W_f W_f^\top.
\end{equation}

With additional weight decay, we have
\begin{equation}
    \frac{\partial \gL(W_f,W)}{\partial W} = (1+\sigma^2) W W_f W_f^\top - W_f W_f^\top + \eta W,
\end{equation}
where $\eta > 0$ is the coefficient of weight decay. Suppose the spectral decomposition $W_fW_f^\top$ is $W_fW_f^\top = V\Lambda V^\top$, where $V$ is an orthogonal matrix and $\Lambda$ is a diagonal matrix consisting of decending eigenvalues $\lambda_1,\cdots,\lambda_k$. Therefore, 
\begin{equation}
  \sC_z =(1+\sigma^2)W_fW_f^\top= V(1+\sigma^2)\Lambda V^\top = V\Lambda_zV^\top,
\end{equation} 
\begin{equation}
    \sC_+ = \E_{x,x^+}z_x z_x^\top = W_fW_f^\top  = V\Lambda V^\top.
\end{equation}
Let $\frac{\partial \gL(W)}{\partial W} = 0$, we get $W^* = W_f W_f^\top\left((1+\sigma^2)W_f W_f^\top + \eta I \right)^{-1}=V\Lambda_w V^\top$, where $\Lambda_w = \diag\{\lambda_1/((1+\sigma^2)\lambda_1 +\eta), \cdots, \lambda_k/((1+\sigma^2)\lambda_k +\eta)\}$. It follows that
\begin{equation}
    \sC_p = \E_{x}p_x p_x^\top = W^*\E_{x}z_x z_x^\top W^{*\top}= W^*\sC_z W^{*\top} = V\Lambda_w^2\Lambda_z V^\top = V\Lambda_p V^\top.
\end{equation}

Hence, $\sC_p,\sC_z,\sC_{+}$ share the same eigenspace.
\end{proof}
}

\subsection{Proof of Lemma \ref{thm:spectral-filter}}
\begin{proof}
By the definition of $\tilde{p}_x=W_g z_x$, we know that
\begin{equation}
    \sC_{\tilde{p}} =\E_x{\tilde{p}_x\tilde{p}^\top_x} = \E_x W_g z_x z_x^\top W_g^\top =W_g \left(\E_x z_x z_x^\top \right) W_g^\top = W_g \sC_z W_g^\top. 
\end{equation}
Since $ W_g=V g(\Lambda_z)V^\top,\sC_z=V\Lambda_z V^\top $, $\sC_p=V\Lambda_p V^\top $ and $g(\lambda_i^z)= \sqrt{\lambda_i^p/\lambda_i^z}$, we have 
\begin{equation}
   \sC_{\tilde{p}} =V g(\Lambda_z)V^\top V\Lambda_z V^\top V g(\Lambda_z)V^\top = V g(\Lambda_z)\Lambda_z  g(\Lambda_z)V^\top =V \Lambda_p V^\top = \sC_p. 
\end{equation}
\end{proof}

\subsection{Proof of Lemma \ref{lem:C_+}}
\begin{proof} For the first equality, we have
\begin{align}
\sC_+ &=\E_{x,x^+}z_x z_{x^+}^\top= \E_{\bar x}\E_{x,x^+|\bar x}z_x z_{x^+}^\top 
    = \E_{\bar x}\left(\E_{x|\bar x}z_x\right)\left(\E_{x|\bar x^+}z_{x^+}\right)^\top = \E_{\bar x}z_{\bar x}z_{\bar x}^\top= \bar\sC.
\end{align}

For the first part,
\begin{align}
    \bar{\sC}+\sV_{x|\bar x} &= \E_{\bar{x}}z_{\bar x}z_{\bar x}^\top + \E_{\bar x}\left[\E_{x|\bar x}\left(z_x-z_{\bar x}\right)\left(z_x-z_{\bar x}\right)^\top\right] \\
    &= \E_{\bar{x}}z_{\bar x}z_{\bar x}^\top + \E_{\bar x}\left[\E_{x|\bar x} z_x z_x^\top -\left(\E_{x|\bar x}z_x\right)z_{\bar x}^\top - z_{\bar x}\left(\E_{x|\bar x}z_x\right)^\top +z_{\bar x}z_{\bar x}^\top \right] \\
    & = \E_{\bar{x}}z_{\bar x}z_{\bar x}^\top + \E_{\bar x}\E_{x|\bar x} z_x z_x^\top - \E_{\bar x}z_{\bar x}z_{\bar x}^\top \\
    & = \E_{ x} z_x z_x^\top \\
    & = \sC_{z}.
\end{align}

\end{proof}

\subsection{Proof of Theorem \ref{thm:optimal-predictor}}
\begin{proof}

Similar to the derivation in the proof of Lemma \ref{thm:alignment-linear-setting}, we have
\begin{align}
    \gL(W)&=\E_{x,x^+}\frac{1}{2}\|W z_x-\sg(z_{x^+})\|^2 \\
    & =\frac{1}{2} \left[\Tr\left( W^\top W \E_{x}z_x z_x^\top\right) -2\Tr\left(  W \E_{x,x^+}z_x z_{x^+}^\top \right) + \Tr\left( \E_{x^+}z_{x^+}z_{x^+}^\top\right)\right]. 
\end{align}
Notice that $\sC_z = \E_{x}z_x z_x^\top=\E_{x^+}z_{x^+}z_{x^+}^\top  $ and 
\begin{align}
    \E_{x,x^+}z_x z_{x^+}^\top &= \E_{\bar x}\E_{x,x^+|\bar x}z_x z_{x^+}^\top \\
    &= \E_{\bar x}\left(\E_{x|\bar x}z_x\right)\left(\E_{x|\bar x^+}z_{x^+}\right)^\top \\
    &= \E_{\bar x}z_{\bar x}z_{\bar x}^\top\\
    & = \sC_{\bar z}.
\end{align}
Hence,
\begin{equation}
     \gL(W)= \frac{1}{2} \left[\Tr\left( W^\top W \sC_z\right) -2\Tr\left(  W \bar\sC \right) + \Tr\left( \sC_z\right)\right].
\end{equation}
Taking partial derivative with respect to $W$, we get
\begin{equation}
    \frac{\partial \gL(W)}{\partial W} = W \sC_z - \bar\sC.
\end{equation}
Let $\frac{\partial \gL(W)}{\partial W} = 0$, we have $W^* =\bar\sC\sC_z^{-1} $. Since $\sC_{z},\bar{\sC}$ have aligned eigenspace $V$, we have
\begin{align}
    & V\bar\Lambda V^\top +\sV_{x|\bar x} = V\Lambda_{z}V^\top \\
    \Longrightarrow & \sV_{x|\bar x} = V\left(\Lambda_{z}-\bar\Lambda\right)V^\top ,
\end{align}
which implies that the corresponding eigenvalues satisfy $\varepsilon_i =\lambda^z_i-\bar\lambda_i \geq 0$, where $\bar\lambda_i,\varepsilon_i$ denote the $i$-th eigenvalues of $\bar\sC,\sV_{x|\bar x}$, respectively.
And $\bar\sC\sC_z^{-1}=V\bar\Lambda V^\top \left(V\Lambda_{z}V^\top\right)^{-1} = V\bar\Lambda\Lambda_{ z}^{-1} V^\top = V\Omega V^\top$, where $\Omega$ is a diagonal matrix and $\Omega_{ii}=\frac{\bar\lambda_i}{\lambda^z_i}\in [0,1], i= 1,2, \dots,k$.
\end{proof}

\subsection{Proof of Theorem \ref{thm:low-pass}}

Before the proof of Theorem \ref{thm:low-pass}, we first introduce the following useful lemma.
\begin{lemma}\label{lem:lower-entropy}
Assume that $\sum_{i=1}^k q_i=1$ and $q_1 \geq q_2\geq \cdots \geq q_k>0$, then for any $1 \leq i < j \leq k$ and any $\Delta \in (0, p_j)$, it holds that
\begin{equation}
    H(q_1,q_2,\dots, q_k) > H(q_1,\dots, q_i+\Delta, \dots,q_j-\Delta,\dots,q_k).
\end{equation}
\end{lemma}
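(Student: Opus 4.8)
The plan is to prove Lemma~\ref{lem:lower-entropy} by reducing the general two-coordinate perturbation to the elementary fact that Shannon entropy is concave and that the map $t\mapsto -t\log t$ is strictly concave on $(0,1]$. First I would fix $i<j$ and $\Delta\in(0,q_j)$, and observe that only the two coordinates $q_i$ and $q_j$ change, with their sum $s:=q_i+q_j$ preserved; all other terms of $H$ are untouched. Hence it suffices to show that the one-dimensional function $\phi(t):=(-t\log t)+(-(s-t)\log(s-t))$, defined for $t\in(0,s)$, satisfies $\phi(q_i)>\phi(q_i+\Delta)$ whenever $q_i\geq q_j$ (equivalently $q_i\geq s/2$) and $0<\Delta<q_j=s-q_i$, so that $q_i+\Delta$ stays in $(q_i,s)$.

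The key step is the shape of $\phi$: it is strictly concave on $(0,s)$ (sum of two strictly concave functions), symmetric about $t=s/2$, and therefore strictly increasing on $(0,s/2]$ and strictly decreasing on $[s/2,s)$. Since $q_i\geq s/2$, we are on the decreasing branch; moving from $q_i$ to $q_i+\Delta$ with $\Delta>0$ and $q_i+\Delta<s$ keeps us on $[s/2,s)$ (because $q_i\geq s/2$ already), so $\phi$ strictly decreases, giving $\phi(q_i+\Delta)<\phi(q_i)$. Adding back the unchanged terms $\sum_{\ell\neq i,j}(-q_\ell\log q_\ell)$ to both sides yields exactly the claimed strict inequality $H(q_1,\dots,q_k)>H(q_1,\dots,q_i+\Delta,\dots,q_j-\Delta,\dots,q_k)$. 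I would present the monotonicity of $\phi$ via its derivative, $\phi'(t)=\log\!\big((s-t)/t\big)$, which is positive for $t<s/2$ and negative for $t>s/2$; then $\phi(q_i+\Delta)-\phi(q_i)=\int_{q_i}^{q_i+\Delta}\phi'(t)\,dt<0$ since the integrand is negative throughout $(q_i,q_i+\Delta)\subset(s/2,s)$.

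The only mild subtlety — and the place to be careful — is the boundary bookkeeping: one must check that the perturbed coordinates remain strictly positive (so that $-t\log t$ is well-defined and the entropy formula still applies), which follows from $\Delta<q_j$ giving $q_j-\Delta>0$, and from $q_i+\Delta<q_i+q_j\le q_1+q_2\le 1$; and that $q_i\geq q_j$, which is given by the ordering hypothesis $q_1\geq\cdots\geq q_k$, is precisely what places $q_i$ on the decreasing side $t\geq s/2$ of $\phi$. No assumption that the perturbed tuple is still sorted is needed, since $H$ is symmetric in its arguments. This is essentially the standard "transfer toward the top increases concentration, hence decreases entropy" argument, and I expect no real obstacle beyond stating the concavity/monotonicity of $\phi$ cleanly.
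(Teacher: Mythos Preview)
Your proposal is correct and is essentially the same as the paper's proof: both reduce to the two affected coordinates, define the one-variable function obtained by varying the split of their fixed sum, and conclude via the sign of its derivative (the paper writes $f'(x)=-\log(q_i+x)+\log(q_j-x)<0$ on $(0,q_j)$, which is exactly your $\phi'(t)=\log((s-t)/t)<0$ for $t>s/2$ after the change of variable $t=q_i+x$). Your additional framing in terms of strict concavity and symmetry of $\phi$ about $s/2$ is nice intuition but not needed beyond what the derivative computation already gives.
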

\begin{proof}
We first note that
\begin{align}
    &H(q_1,q_2,\dots, q_k) > H(q_1,\dots, q_i+\Delta, \dots,q_j-\Delta,\dots,q_k)\\
    \iff & -q_i\log(q_i)-q_j\log(q_j) > -(q_i+\Delta)\log(q_i+\Delta)-(q_j-\Delta)\log(q_j-\Delta).\label{eq:entropy}
\end{align}
Define $f(x) = -(q_i+x)\log(q_i+x)-(q_j-x)\log(q_j-x)$ and its first order derivative satisfies
\begin{equation}
    f'(x) = -\log(q_i+x)+\log(q_j-x) <0 ,\quad \forall x\in (0, q_j).
\end{equation}
Hence, Equation (\ref{eq:entropy}) holds. This completes the proof of the lemma.
\end{proof}

\begin{proof}[Proof of Theorem \ref{thm:low-pass}]
Let $q_i^z = \lambda_i^z/(\sum_{l=1}^k \lambda_l^z)$ and $q_i^p = \lambda_i^p/(\sum_{l=1}^k \lambda_l^p)$, where $i = 1,2, \dots, k$, then $\sum_{i=1}^k q_i^z =\sum_{i=1}^k q_i^p =1 $, $q_1^z \geq q_2^z \geq \cdots \geq q_k^z $ and $q_1^z \geq q_2^z \geq \cdots \geq q_k^z $. Without loss of generality, we assume that $q_k^p,q_k^z>0 $. Because $g(\lambda^z_i)=\sqrt{\lambda^p_i/\lambda^z_i}$ is monotonically increasing and $\lambda^z_i \geq \lambda^z_j$, for any $1 \leq i < j \leq k$, we have 
\begin{equation}\label{eq:ratio of probability}
    \frac{q_i^p}{q_j^p} = \frac{\lambda_i^p/(\sum_{l=1}^k \lambda_l^p)}{\lambda_j^p/(\sum_{l=1}^k \lambda_l^p)} = \frac{\lambda_i^p}{\lambda_j^p} = \frac{g^2(\lambda^z_i)\lambda^z_i}{g^2(\lambda^z_j)\lambda^z_j} \geq \frac{\lambda^z_i}{\lambda^z_j} = \frac{\lambda_i^z/(\sum_{l=1}^k \lambda_l^z)}{\lambda_j^z/(\sum_{l=1}^k \lambda_l^z)} = \frac{q_i^z}{q_j^z}.
\end{equation}
If $g(\lambda_i^z)$ is constant, it follows that 
\begin{equation}
    \frac{q_i^p}{q_j^p} = \frac{q_i^z}{q_j^z} ,\quad \forall  1 \leq i < j \leq k.
\end{equation}
Combining with $\sum_{i=1}^k q_i^z =\sum_{i=1}^k q_i^p =1 $, we get $q_i^p =q_i^z , i=1,2,\dots,k$. Hence,
\begin{equation}
    H(q_1^p, q_2^p, \dots, q_k^p) = H(q_1^z, q_2^z, \dots, q_k^z),
\end{equation}
which implies that $\operatorname{erank}(\sC_p)=\operatorname{erank}(\sC_z)$.

If $g(\lambda_i^z)$ is non-constant, then $g(\lambda_1^z) > g(\lambda_k^z)$. And it follows that 
\begin{equation}\label{eq:q_1/q_k}
    \frac{q_1^p}{q_k^p} > \frac{q_1^z}{q_k^z}.
\end{equation}

Arming with Equations (\ref{eq:ratio of probability}) and (\ref{eq:q_1/q_k}), we get
\begin{equation}
    1 = \sum_{i=1}^k q_i^p =q_k^p \sum_{i=1}^k \frac{q_i^p}{q_k^p} > q_k^p \sum_{i=1}^k \frac{q_i^z}{q_k^z}= \frac{q_k^p}{q_k^z}\sum_{i=1}^k q_i^z = \frac{q_k^p}{q_k^z},
\end{equation}
which indicates $q_k^p < q_k^z$. Similarly, we have $q_1^p > q_1^z$. Hence, $m = \max \{i  | q_i^p \geq q_i^z, i = 1,2, \dots, k \}$ exists and $m <1$. Using Equation (\ref{eq:ratio of probability}), we have
\begin{equation}
\begin{cases}
q_i^p \geq q_i^z & \text { if } 1 \leq i \leq m, \\ 
q_i^p < q_i^z & \text {if } m+1 \leq i \leq k.
\end{cases}
\end{equation}
Directly applying $\sum_{i=1}^k q_i^z =\sum_{i=1}^k q_i^p =1 $ gives that
\begin{equation}\label{eq:correctness of transportation}
    \sum_{i=1}^m \underbrace{q_i^p-q_i^z}_{\geq 0} = \sum_{i=m+1}^k \underbrace{q_i^z-q_i^p}_{> 0}.
\end{equation}
According to Lemma \ref{lem:lower-entropy} and Equation (\ref{eq:correctness of transportation}), if we transport the redundancy from right to left, the entropy of the distribution will decrease. The transportation process is described as following:

\begin{enumerate}
    \item[Step 1] Let $i\gets 1$ and $j\gets k $.
    \item[Step 2] If $\Delta = \min\{q_i^p-q_i^z, q_j^z-q_j^p\} >0$, then $q_i^z \gets q_i^z+\Delta$  and $q_j^z \gets q_j^z-\Delta$.
    \item[Step 3] If $q_i^p = q_i^z$, $i \gets i+1$. Else $j \gets j -1$.
    \item[Step 4] If $i \geq j$ , we finish this process. Else, we return step 2.
\end{enumerate}

After at most $k-1$ loops, $(q_1^z, q_2^z, \dots, q_k^z)$ becomes $(q_1^p, q_2^p, \dots, q_k^p)$ . Equation (\ref{eq:correctness of transportation}) ensures the correctness of the transportation process. According to  Lemma \ref{lem:lower-entropy}, we have
\begin{equation}
     H(q_1^z, q_2^z, \dots, q_k^z) >H(q_1^p, q_2^p, \dots, q_k^p),
\end{equation}
which implies that $\operatorname{erank}(\sC_p)<\operatorname{erank}(\sC_z)$.

\end{proof}

\subsection{Proof of Theorem \ref{thm:online-update-general}}
{
\begin{proof}
Since $p_x^{(t+1)}=p_x^{(t)}-\alpha(p_x^{(t)}-z_{x^+}^{(t)}) = (1-\alpha)p_x^{(t)} + \alpha z_{x^+}^{(t)}$ and $p_x = Wz_x$,
\begin{align}
    \sC_p^{(t+1)} &= \E_x p_x^{(t+1)}(p_x^{(t+1)})^\top \\
    &=\E_{x,x^+}\left( (1-\alpha)p_x^{(t)} + \alpha z_{x^+}^{(t)}\right) \left((1-\alpha)p_{x}^{(t)} + \alpha z_{x^+}^{(t)}\right)^\top \\
    &=\E_{x}(1-\alpha)^2 p_{x}^{(t)}\left(p_{x}^{(t)}\right)^\top  +\E_{x^+}\alpha^2 z_{x^+}^{(t)} \left(z_{x^+}^{(t)}\right)^\top \notag\\
    &\quad + \E_{x,x^+}\alpha (1-\alpha)\left[p_{x}^{(t)} \left(z_{x^+}^{(t)}\right)^\top+z_{x^+}^{(t)} \left(p_{x}^{(t)}\right)^\top\right]  \\
    &=(1-\alpha)^2 \E_{x}p_{x}^{(t)}\left(p_{x}^{(t)}\right)^\top  +\alpha^2 \E_{x^+} z_{x^+}^{(t)} \left(z_{x^+}^{(t)}\right)^\top \notag\\
    &\quad + \alpha (1-\alpha)\left[W\E_{x,x^+}z_{x}^{(t)} \left(z_{x^+}^{(t)}\right)^\top+\E_{x,x^+}z_{x^+}^{(t)} \left(z_{x}^{(t)}\right)^\top W^\top \right]  \\
    & = (1-\alpha)^2\sC_p^{(t)}+\alpha^2\sC_z^{(t)}+\alpha (1-\alpha)\left(W\sC_{+} + \sC_{+}W^\top \right).
\end{align}
It follows that
\begin{align}
    V\Lambda_p^{(t+1)}V^\top &= (1-\alpha)^2 V\Lambda_p^{(t)}V^\top + \alpha^2 V\Lambda_z^{(t)}V^\top + 2\alpha(1-\alpha)  V\Lambda_w^{(t)} \Lambda_{+}^{(t)} V^\top .
\end{align}
As a result, we have 
\begin{equation}
    \Lambda_p^{(t+1)} = (1-\alpha)^2\Lambda_p^{(t)} +  \alpha^2 \Lambda_z^{(t)} + 2\alpha(1-\alpha)\Lambda_w^{(t)}\Lambda_{+}^{(t)} ,
\end{equation}
\ie
\begin{equation}\label{eq:lambda}
    \lambda_{i,t+1}^p = (1-\alpha)^2 \lambda_{i,t}^p  + \alpha^2 \lambda_{i,t}^z +2\alpha(1-\alpha)\lambda_{i,t}^w \lambda_{i,t}^{+}.
\end{equation}

Dividing both sides of Equation (\ref{eq:lambda}) by $\lambda^p_{i,t}$, we have

\begin{align}
    \frac{\lambda^p_{i,t+1}}{\lambda^p_{i,t}} &= (1-\alpha)^2+        \alpha^2\frac{\lambda^z_{i,t}}{\lambda^p_{i,t}} +2\alpha(1-\alpha)\frac{\lambda_{i,t}^w\lambda^{+}_{i,t}}{\lambda^p_{i,t}} \\
    &= (1-\alpha)^2+        \alpha^2h^2_t(\lambda^p_{i,t})  +2\alpha(1-\alpha)h_t(\lambda^p_{i,t})\frac{\lambda^{+}_{i,t}}{\lambda^z_{i,t}},
\end{align}

where $h_t(\lambda^p_{i,t})=1/\lambda_{i,t}^w=\sqrt{\frac{\lambda^z_{i,t}}{\lambda^p_{i,t}}}$.
\begin{itemize}
    \item If $W = W^*= \sC_+^{(t)} (\sC_z^{(t)})^{-1}$, then $h_t(\lambda^p_{i,t})=\frac{\lambda^z_{i,t}}{\lambda^{+}_{i,t}}$. Hence, $\frac{\lambda^p_{i,t+1}}{\lambda^p_{i,t}} = 1 - \alpha^2 +\alpha^2 h^2_t(\lambda^p_{i,t}) $ is monotonically decreasing and non-constant.
    \item If $\lambda^z_{i,t}=\lambda^{+}_{i,t}$, then $\frac{\lambda^p_{i,t+1}}{\lambda^p_{i,t}} = (1 - \alpha)^2 +\alpha^2 h^2_t(\lambda^p_{i,t})+2\alpha(1-\alpha)h_t(\lambda^p_{i,t}) $ is monotonically decreasing and non-constant ($0<\alpha<1$).
\end{itemize}

According to Theorem \ref{thm:low-pass}, we have $\erank(\sC_{p^{(t+1)}})>\erank(\sC_{p^{(t)}})$.

\end{proof}
}

\section{Experimental Details}
\label{sec:experimental-details}
In this section, we provide the details and hyperparameters for SymSimSiam and variants of spectral filters.

\subsection{Datasets}
 We evaluate the performance of our methods on four benchmark datasets: 
CIFAR-10, CIFAR-100 \citep{cifar}, ImageNet-100 and ImageNet-1k \citep{deng2009imagenet}. CIFAR-10 and CIFAR-100 are small-scale datasets, composed of 32 $\times$ 32 images with 10 and 100 classes, respectively. ImageNet-100 is a subset of ImageNet-1k containing 100 classes. 

\subsection{Implementation Details}
\label{sec:implementation-details}
Unless specified otherwise, we follow the default settings in solo-learn \citep{da2022solo} on CIFAR-10, CIFAR-100 and ImageNet-100.
For ImageNet, our implementation follows the official code of SimSiam \citep{simsiam}, and we use the same settings. For a fair comparison, SimSiam with a learnable linear predictor is adopted as our baseline. With the original projector, SimSiam with a learnable linear predictor could not work, so we delete the last BN in the projector in this case. And we also list the results of SimSiam with the default predictor (we refer it as a learnable nonlinear predictor).

\textbf{Data augmentations.} The augmentation pipeline is RandomResizedCrop with scale in (0.2, 1.0), RandomHorizontalFlip with probability 0.5, ColorJitter (brightness (0.4), contrast (0.4), saturation (0.4), hue (0.1)) with probability 0.8 and RandomGray with probability 0.2. For ImageNet-100 and ImageNet-1k, Gaussian blurring \citep{simclr} with an applying probability 0.5 is also used.

\textbf{Optimization.} SGD is used as the optimizer with momentum 0.9 and weight decay $1.0 \times 10^{-5}$ ($1.0 \times 10^{-4} $ for ImageNet-1k). The learning rate adopts the linear scaling rule (lr×BatchSize/256) with a base learning rate of 0.5 (0.05 for ImageNet-1k). After 10 epochs of warmup training, we use the cosine learning rate decay \citep{cosine}. We use a batch size 256 on CIFAR-10 and CIFAR-100; 128 on ImageNet-100 and 512 on ImageNet-1k.

\textbf{Linear evaluation.} For the linear evaluation, we evaluate the pre-trained backbone network by training a linear classifier on the frozen representation. For CIFAR-10, CIFAR-100 and ImageNet-100, the linear classifier is trained using SGD optimizer with momentum = 0.9, batch size = 256 and initial learning rate = 30.0 for 100 epochs. The learning rate is divided by 10 at epochs 60 and 80. For ImageNet-1k, following the official code of SimSiam \citep{simsiam}, we train the linear classifier for 90 epochs with a LARS optimizer \citep{lars} with momentum = 0.9,  batch size = 4096, weight decay = 0, initial learning rate = 0.1 and cosine decay of the learning rate.

\subsection{Experimental Setting for Figures}
\label{sec:experimental-setting-for-figures}
For the implementations of BYOL, SimSiam, SwAV and DINO on CIFAR-10 and CIFAR-100, we use codes and all default settings in the open-source library solo-learn \citep{da2022solo}. For SwAV, we do not store features from the previous batches to augment batch features (queue\_size = 0) for the consistency of training loss. We adopt ResNet-18 as the backbone encoder. The dimensions of the outputs of BYOL, SimSiam, SwAV and DINO are 256, 2048, 3000 and 4092, respectively.

In Figure \ref{fig:alignment}, we use the alignment metric to measure the eigenspace alignment of the online and target outputs, i.e., $\sC_p$ and $\sC_z$. The intuition is that, given $u_i$ as an eigenvector of $\sC_z$, if it is also an eigenvector of $\sC_p$, then $\sC_pu_i=\lambda'_i u_i$ is in the same direction as $u_i$. Thus, the higher the cosine similarity between $u_i$ and $\sC_p u_i$, the more aligned between the eigenspace of $\sC_p$ and $\sC_z$. Formally, we define the alignment between the eigenspace of $\sC_p$ and $\sC_z$ as
\begin{equation}
    Alignment(\sC_p,\sC_z) = \frac{1}{m}\sum_{i =1}^m \frac{u_i^T}{\|u_i\|_2} \frac{\sC_p u_i}{\|\sC_p u_i\|_2},
\end{equation}
where $u_i$ is eigenvector corresponding to the $i$-th largest eigenvalue $\lambda_i^z$ of $\sC_z$, $i=1,2, \dots, m$. And $m$ is set to 512 for SimSiam, SwAV and DINO and 128 for BYOL. The sum of the first $m$ eigenvalues is greater than 99.99\% of the sum of all eigenvalues. Therefore, we can think that the space span by the first $m$ eigenvectors is a good approximation to the original eigenspace. 
And in Figures \ref{fig:m2}, \ref{fig:asymmetric design} \& \ref{fig:m2-cifar100}, the first 512 point pairs are displayed  for SimSiam, SwAV and DINO (128 for BYOL). 

\subsection{Pseudocode}
Here, we provide the pseudocode for SymSimSiam (Algorithm \ref{alg:symsimsiam}) and variants of spectral filters (Algorithm \ref{alg:filter}).
\begin{algorithm}
	\caption{SymSimSiam: Pseudocode in a PyTorch-like style.}
	\label{alg:symsimsiam}
    \begin{lstlisting}[
    language=Python,
    commentstyle=\color{green!40!black},
    keywordstyle=\color{magenta},
    morekeywords={mean},
    ]
# f: backbone + projection mlp
# c: center (1-by-k)
# m: momentum 0.9

for x in loader:               # load a minibatch x with n samples 
    x1, x2 = aug(x), aug(x)    # two random augmentations
    z1, z2 = f(x1), f(x2)      # projections, n-by-k
    z1 = normalize(z1, dim=1)  # l2-normalize 
    z2 = normalize(z2, dim=1)  # l2-normalize 
    
    update_center(cat(z1,z2))
    z1, z2 = z1-c, z2-c       
    
    loss = -(z1*z2).sum(dim=1).mean()
    
    loss.backward() 
    update(f.params)           # SGD update
    
@torch.no_grad()
def update_center(z):
    c = m * c + (1-m) * z.mean(dim=0)  
    \end{lstlisting}
\end{algorithm}

\begin{algorithm}
	\caption{Variants of spectral filters: Pseudocode in a PyTorch-like style.}
	\label{alg:filter}
    \begin{lstlisting}[
    language=Python,
    commentstyle=\color{green!40!black},
    keywordstyle=\color{magenta}]
# f: backbone + projection mlp
# g: spectral filter
# location: the location of spectral filter (online or target)

for x in loader:               # load a minibatch x with n samples 
    x1, x2 = aug(x), aug(x)    # two random augmentations
    z1, z2 = f(x1), f(x2)      # projections, n-by-k
    
    loss = L(z1,z2,location)/2 + L(z2,z1,location)/2
    
    loss.backward() 
    update(f.params)           # SGD update
    
def L(p,z,location):           # loss function
    if location == "online":   # online predictor
        with torch.no_grad():
            u, s, vh = svd(p) 
            w = vh.T @ diag(g(s)) @ vh
        p = p @ w.detach()
    else:                      # target predictor
        with torch.no_grad():
            u, s, vh = svd(z)
            z = u @ diag(g1(s)) @ vh  # g1(t) = tg(t)
            
    z = z.detach()             # stop gradient 
    p = normalize(p, dim=1)    # l2-normalize 
    z = normalize(z, dim=1)    # l2-normalize 
    return -(p*z).sum(dim=1).mean()
    \end{lstlisting}
\end{algorithm}

\section{Additional Results Based on BYOL Framework}
\label{sec:resluts-on-byol}

In the main paper, we conduct experiments based on the SimSiam framework. Here, we also gather some results of variants of target high-pass filters based on the BYOL framework. We adopt ResNet-18 as the backbone encoder and use the default setting in \cite{da2022solo}. 

Table \ref{table: accuracy of target predictor on byol} shows that our target high-pass filters can often outperform BYOL with a large margin at earlier epochs. In particular, on CIFAR-10, the target filter $h(\sigma)=\sigma^{-0.3}$ improves BYOL with learnable linear predictor by 4.92\%, 2.68\% and 1.00\% with 100, 200 and 400 epochs, respectively. On CIFAR-100, the filter $h(\sigma)=\sigma^{-1}$ is better 8.75\% than the baseline with 100 epochs, the filter $h(\sigma)=\sigma^{-0.7}$ improves the baseline by 5.05\% with 200 epochs and $h(\sigma)=\sigma^{-0.3}$ improves the baseline by 5.22\% with 400 epochs.

\begin{table}[t]
\caption{Linear probing accuracy (\%) of BYOL with different target predictors on CIFAR-10/100. }
\label{table: accuracy of target predictor on byol}
\begin{center}
\begin{tabular}{lcccccc}
\toprule
\multicolumn{1}{c}{\multirow{2}{*}{}} & \multicolumn{3}{c}{CIFAR10} & \multicolumn{3}{c}{CIFAR100} \\
\multicolumn{1}{c}{Predictor}                  & 100ep   & 200ep   & 400ep   & 100ep    & 200ep   & 400ep   \\
\midrule
{BYOL (nonlinear predictor)}              & 82.14	  & 87.89	& 91.24	&53.92	&60.75	& 67.41   \\
\midrule
{BYOL (linear predictor)}                 & 80.81	  & 86.58	& 89.73	    &49.65	& 57.53	& 60.66   \\
$h(\sigma)=\sigma^{-0.3}$           & \textbf{85.73}	  & \textbf{89.26}	& \textbf{90.73}		& 57.56		& 61.89		& \textbf{65.88 }  \\
$h(\sigma)=\sigma^{-0.5}$           & 85.46	  & 89.24	& 90.30		& 58.12		& 62.51		& 65.16   \\
$h(\sigma)=\sigma^{-0.7}$           & 85.08	  & 88.21	& 90.01	    & 58.20		& \textbf{62.58}		& 65.72 \\
$h(\sigma)=\sigma^{-1}$           & 84.64	  & 88.29	& 89.96		& \textbf{58.40}		& 62.06		& 65.04  \\

\bottomrule
\end{tabular}
\end{center}
\end{table}

\section{Cost Comparison}
\label{sec:cost-comparison}

We compare the training speeds and GPU memory usages of different methods (SimSiam, the online filter $g(\sigma)=\log(1+\sigma)$ and the target filter $h(\sigma)=\sigma^{-0.3}$ ) on CIFAR-10. We perform our experiments on a single RTX 2080ti GPU.

As shown in Table \ref{table:comparation-of-speed-and-memory}, compared to the original SimSiam, the GPU memory usage of our methods only increase a little (26 MiB). SimSiam and the target filter $h(\sigma)=\sigma^{-0.3}$ have the same training time for 20 epochs.

\begin{table}[t]

\caption{Training time and memory comparison of different methods on CIFAR-10.}
\label{table:comparation-of-speed-and-memory}
\centering
\begin{tabular}{lcc}
\toprule
Method                     & Total time for 20 epochs & GPU memory \\
\midrule
SimSiam                    & 527 s                    & 3043 MiB   \\
$g(\sigma)=\log(1+\sigma)$ & 532 s                    & 3069 MiB   \\
$h(\sigma)=\sigma^{-0.3}$  & 527 s                    & 3069 MiB   \\
\bottomrule
\end{tabular}
\end{table}

\section{Additional Visualization}

{
\subsection{Additional Empirical Evidence for Eigenspace Alignment}
\label{sec:evidences_for_mechanism1}

In the main paper, we have shown that $\sC_p$ and $\sC_z$ share the same eigenspace. Here, we add empirical evidence for eigenspace alignment between 
$\sC_z$ and $\sC_+$ on CIFAR-10 and CIFAR-100.  As shown in Table \ref{table:alignment}, we can see that all methods have very high alignment between the eigenspace of $\sC_z$ and $\sC_+$. 

}

\begin{table}[t]
\caption{Eigenspace alignment between $\sC_z$ and $\sC_+$.}
\label{table:alignment}
\center
\begin{tabular}{lcccc}
\toprule
Method         & BYOL   & SimSiam & SwAV   & DINO \\
\midrule
CIFAR-10  & 0.9897 & 0.9979  & 0.9965 & 0.9734  \\
CIFAR-100 & 0.9892 & 0.9972  & 0.9956 & 0.9621  \\
\bottomrule
\end{tabular}
\end{table}

\subsection{Results on CIFAR-100}

We conduct some experiments on CIFAR-100. Figure \ref{fig:erank-cifar100} demonstrates that the target branch always has higher effective rank than the online branch and the rank of the online branch continues to increase after the warmup state in all methods. 

In Figure \ref{fig:m2-eigenvalue-cifar100}, we compare the eigenvalues computed from two branch outputs. There is a clear difference in eigenvalues, and the eigenvalue distribution of the target branch is more biased towards larger values.
Figure \ref{fig:m2-target-online-cifar100} shows the spectral filter $g(\lambda^z_i)=\sqrt{\lambda^p_i/\lambda^z_i}$, where $\lambda_i^p,\lambda_i^z$ are eigenvalues of online and target correlation matrices $\sC_p,\sC_z$, respectively. The spectral filters of all methods are nearly monotonically increasing \wrt $\lambda^z_i$.

\subsection{Training Dynamic}
In Figure \ref{fig:symsimsiam-loss}, we compare the training loss between SimSiam and SymSimSiam on CIFAR-10 and CIFAR-100. We can see that the loss of SimSiam is always larger than SymSimsiam and does not consistently decrease. Instead, the alignment loss of SymSimSiam continues to decrease.

\begin{figure}[h]
    \centering

    \includegraphics[width=\linewidth]{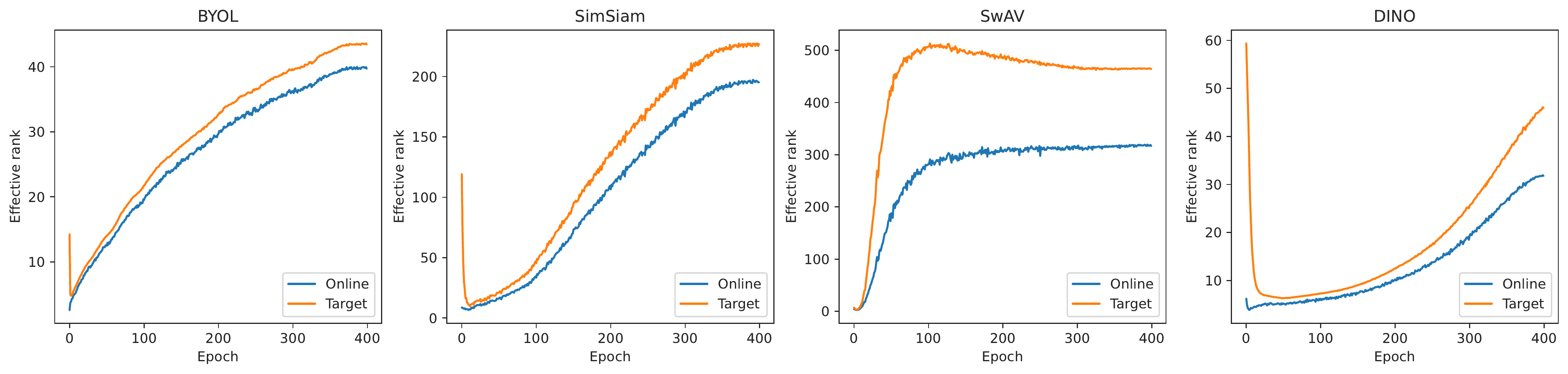}
    \caption{The effective rank of the normalized outputs of the online and target branches along the training dynamics on CIFAR-100. 
    }
    \label{fig:erank-cifar100}
\end{figure}

\begin{figure}[t]
\centering
\subfigure[Eigenvalues of the feature correction matrices of the online and target outputs.]{   %
\includegraphics[width=\linewidth]{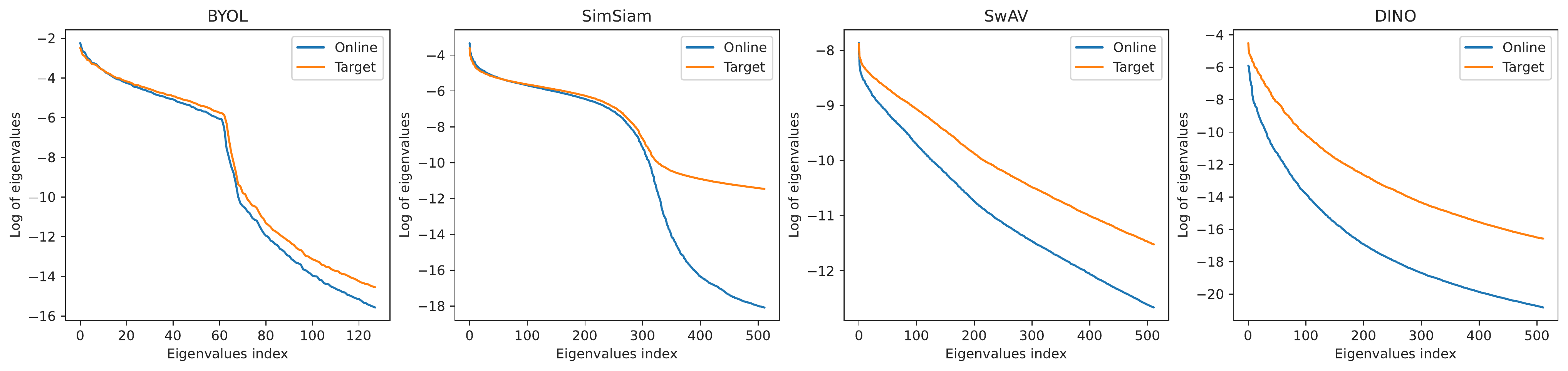}
\label{fig:m2-eigenvalue-cifar100}
}
\subfigure[The corresponding online spectral filter function $g(\lambda^z_i)=\sqrt{\lambda^p_i/\lambda^z_i}$.]{   %
\includegraphics[width=\linewidth]{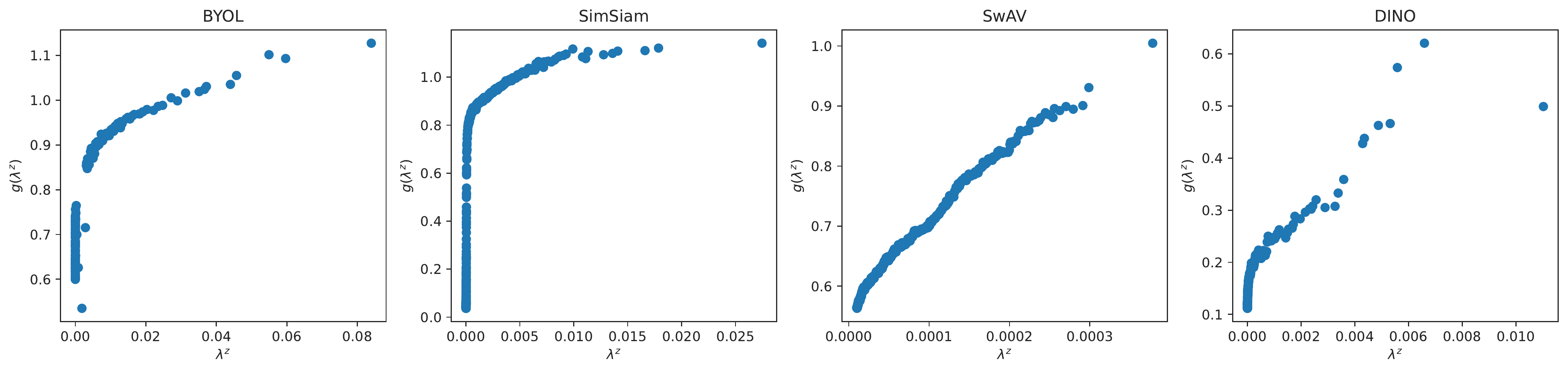}
\label{fig:m2-target-online-cifar100}
}
\caption{Rank difference experiments and spectral filters on CIFAR-100. 
} 
\label{fig:m2-cifar100}
\end{figure}

\begin{figure}[t]
\centering  %
\subfigure[CIFAR-10]{   %
\begin{minipage}{0.45\textwidth}
\centering    %
\includegraphics[width=\linewidth]{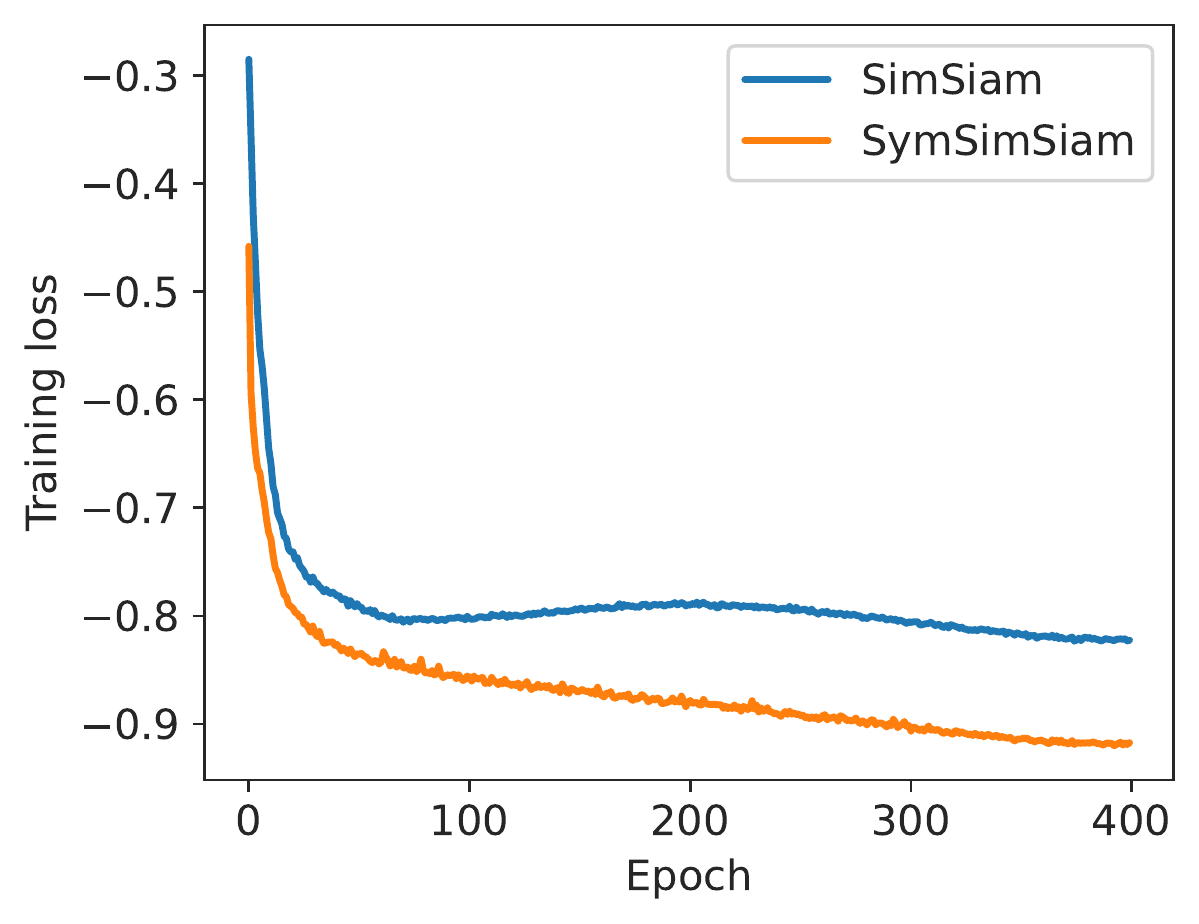}  
\label{fig:cifar10-symsimsiam-loss}
\end{minipage}
}
\hfill
\subfigure[CIFAR-100]{   %
\begin{minipage}{0.45\textwidth}
\centering    %
\includegraphics[width=\linewidth]{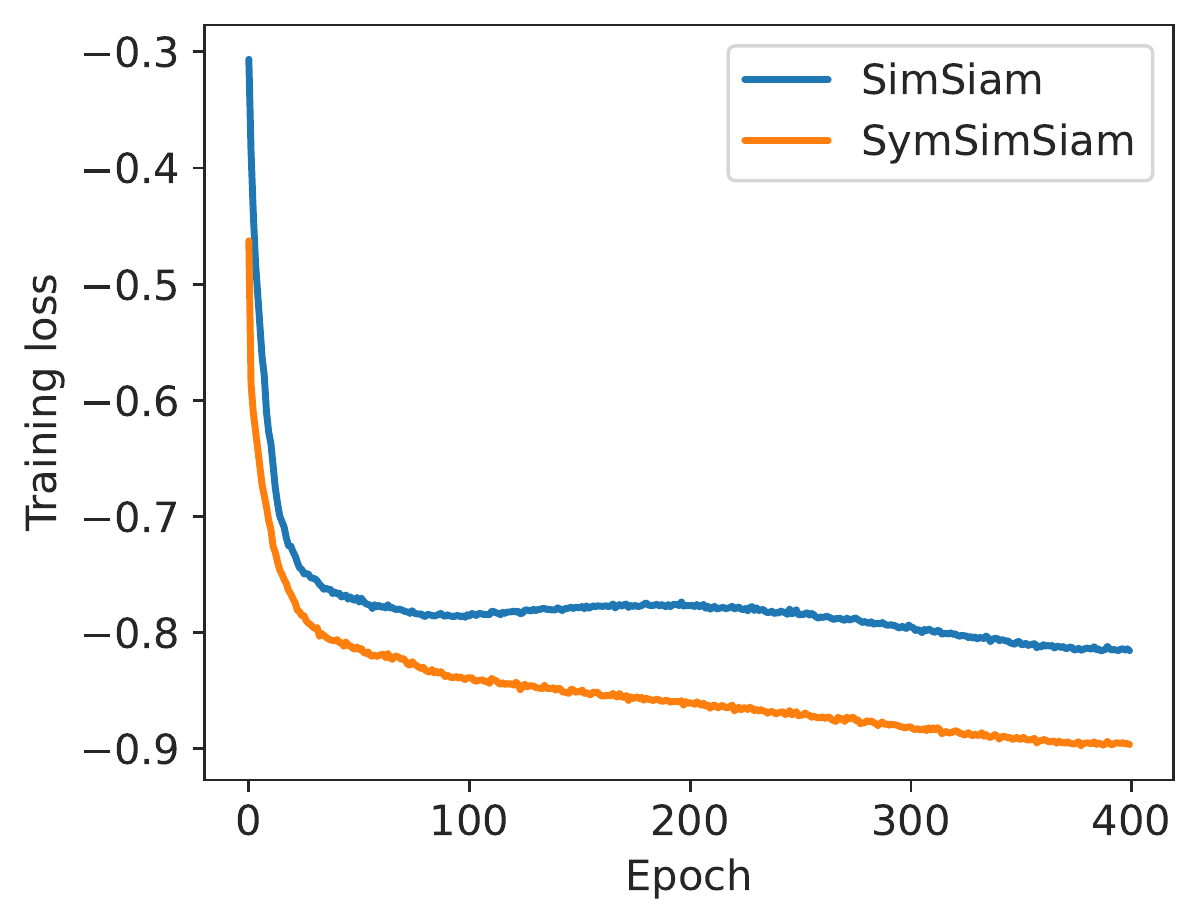}  
\label{fig:cifar100-symsimsiam-loss}
\end{minipage}
}
\caption{Training loss of SimSiam and SymSimSiam on CIFAR-10 and CIFAR-100.}    %
\label{fig:symsimsiam-loss}
\end{figure}

{
\section{Comparison to Related Work}
\label{sec:appendix-related-work}
\textbf{Comparison with \cite{tian2021understanding}}. Although we assume eigenspace alignment as in Tian \etal, we take very different techniques and arrive at different conclusions, as highlighted below:
\begin{itemize}
    \item \textbf{Difference Perspectives, Techniques, and Conclusions}. As for the goal, Tian \etal only consider how predictor helps avoid full collapse. Instead, we first point out that avoiding full collapse is not the key role of asymmetric designs (also achievable with symmetric designs). Thus, we focus on the more general dimensional collapse issue and analyze this quantitatively through the change of the effective rank along training. As for the techniques, Tian \etal mainly analyze the linear learning dynamics under strong architectural and data assumptions, while ours focus on the common spectral filtering property that also holds for nonlinear modules and general data distributions. As for the conclusion, we formally show that asymmetric designs will improve effective dimensionality, while Tian \etal only discuss how it avoids full collapse (which is an extreme case of dimensional collapse, and a non-full-collapse encoder may still suffer from dimensional collapse). 
    \item \textbf{A unified framework for various asymmetric designs}. Tian \etal’s analysis only focuses on the predictor in BYOL and SimSiam, and they cannot explain why SwAV and DINO also work without predictors. Our RDM applies to all existing asymmetric designs through the unified spectral filter perspective.
    \item \textbf{General principles for predictor design.} Tian \etal propose DirectPred, which is only a specific filter. Instead, we point out the core underlying principle, that as long as the online filter is a low-pass filter, it could theoretically avoid feature collapse. We also empirically verify this point by showing that various online low-pass filters can avoid feature collapse.
    \item \textbf{A More Effective Asymmetric Design through Target Predictor}. Based on our RDM theory, we also propose a new kind of asymmetric design in non-contrastive learning: applying a predictor to the target branch. We show that target predictors achieve better results than online predictors while being more computationally efficient.
\end{itemize}

Therefore, our analysis improves \cite{tian2021understanding} in many aspects and apply to a wider context. And we achieve this with new perspectives and techniques that are quite distinctive from \cite{tian2021understanding}. 
}

\end{document}